\documentclass{article}

\usepackage{style}

\usepackage{times}
\usepackage{soul}
\usepackage{url}
\usepackage[hidelinks]{hyperref}
\usepackage[utf8]{inputenc}
\usepackage[small]{caption}
\usepackage{graphicx}
\usepackage{amsmath}
\usepackage{amsthm}
\usepackage{booktabs}
\usepackage{algorithm}
\usepackage{algorithmic}
\usepackage[switch]{lineno}
\usepackage{float}

\usepackage{microtype}
\usepackage{graphicx}
\usepackage{subcaption}
\usepackage{booktabs}

\usepackage{hyperref}

\usepackage{mathtools}
\usepackage{algorithm}
\usepackage{algorithmic}
\usepackage{xcolor}
\usepackage{booktabs}
\usepackage{multicol}
\usepackage{enumitem}
\usepackage{tikz}
\usepackage{amsmath}
\usepackage{amsthm}
\usepackage{amssymb} 
\usepackage{array}
\usepackage{multirow}
\usepackage{pgfplots}
\pgfplotsset{compat=1.18}
\usepackage{adjustbox}
\usetikzlibrary{arrows.meta, shapes.symbols, calc}

\usepackage{newfloat}
\usepackage{listings}
\DeclareCaptionStyle{ruled}{labelfont=normalfont,labelsep=colon,strut=off} 
\floatstyle{ruled}
\newfloat{listing}{tb}{lst}{}
\floatname{listing}{Listing}
\lstdefinestyle{python}{
    language=Python,
    basicstyle=\ttfamily\footnotesize,
    keywordstyle=\color{blue}\bfseries,
    stringstyle=\color{red},
    commentstyle=\color{green!50!black},
    numberstyle=\tiny,
    stepnumber=1,
    numbersep=5pt,
    frame=lines,
    breaklines=true,
    showstringspaces=false,
    captionpos=b,
    tabsize=4,
    escapeinside={(*@}{@*)}
}

\usepackage{scalerel}

\newcommand{\prob}[0]{\mathbb{P}}
\newcommand{\inputspace}[0]{\mathcal{I}}
\newcommand{\outputspace}[0]{\mathcal{O}}

\newcommand{\Sfunction}[0]{S}

\newcommand{\unknown}[0]{\star}

\newcommand{\setofvaluations}{\mathcal{V}}

\newcommand{\low}{\mathit{low}}
\newcommand{\up}{\mathit{up}}

\newcommand{\stopfunction}{\texttt{STOP}}
\newcommand{\searchheuristic}{\texttt{H}}

\newcommand{\tot}[1]{\mathrm{tot}({#1})}

\makeatletter
\@ifundefined{citep}{\DeclareRobustCommand{\citep}[1]{\cite{#1}}}{}
\@ifundefined{citet}{\DeclareRobustCommand{\citet}[1]{\citeauthor{#1}~\shortcite{#1}}}{}
\makeatother

\newtheorem{theorem}{Theorem}

\title{DPNL: A DPLL-based Algorithm for Probabilistic Neurosymbolic Learning}

\author{
Thomas Valentin\and
Pierre Genevès\and
Luisa Werner\and
Sarah Chlyah\and
Nabil Layaïda\and
Nils Gesbert\\
\affiliations
Tyrex team, Univ. Grenoble Alpes, CNRS, Inria, Grenoble INP, LIG
Grenoble, France, 38000
}

\begin{document}
 
\maketitle

\begin{abstract}
Probabilistic Neurosymbolic Learning (PNL) combines neural predictions
with symbolic reasoning, enabling end-to-end learning from final-output
supervision without labels for intermediate concepts.
A central challenge is probabilistic inference: state-of-the-art
approaches often rely on materializing the logical provenance of a
query, which can itself become a major computational bottleneck.
We introduce Dynamic Probabilistic Neurosymbolic Learning (DPNL), an oracle-guided framework that avoids requiring complete provenance materialization before inference.
DPNL lazily explores the space of intermediate assignments, while oracles resolve entire regions that can already be certified to produce or exclude the target output.
We establish conditions ensuring soundness and termination.
ApproxDPNL extends the same search with early termination while
maintaining certified bounds on the exact output probability, providing
controlled approximation guarantees.
The oracle interface decouples inference from the representation of the
symbolic component, enabling problem-specific reasoning within the same
framework.
Experiments on several neurosymbolic tasks show that DPNL and
ApproxDPNL substantially extend the range of problem instances tractable
by probabilistic neurosymbolic inference.
\end{abstract}

\section{Introduction}
Neurosymbolic artificial intelligence seeks to combine the learning
capabilities of neural networks with structured symbolic reasoning
\cite{NeSy:Survey:vanHarmelen2019,NeSy:Survey:Varmeulen2023}.
Probabilistic Neurosymbolic Learning (PNL) is one approach to this
integration, in which neural networks predict probability distributions
over intermediate concepts and a symbolic component combines these
predictions to produce the final output
\cite{NeSy:Survey:DeRaedt2020,NeSy:Scallop:Siyang2023}.
A key advantage of this setting is that the intermediate predictions
need not be supervised directly. Instead, the symbolic computation
connects them to the observed output, allowing the neural component to
be trained end-to-end from supervision on the final task.
At the same time, the symbolic component makes explicit how intermediate predictions are combined to determine the final output, providing a structured and inspectable reasoning process.

Consider, for instance, a kinship-reasoning task in the style of
CLUTRR~\cite{NeSy:CLUTRR:Sinha2019}.
Given a short text describing relations between people, a neural
component may predict distributions over uncertain relations such as
\emph{sister}, \emph{aunt}, or \emph{parent}.
Symbolic kinship rules then compose these relations to answer a query
such as ``How are Rich and Anne related?''.
Training may provide only the correct answer to this final query, and
not the intermediate relations required to derive it.
Learning therefore requires propagating supervision through the
symbolic reasoning process while accounting for uncertainty in the
intermediate neural predictions.

This introduces a central computational challenge.
To obtain the probability of a given output, probabilistic reasoning
must aggregate the probability mass of all assignments of the
intermediate predictions that make the symbolic computation produce
that output.
Depending on the task, this aggregation may admit an efficient
computation or a compact representation.
In general, however, the number of relevant assignments can grow
rapidly with the number of intermediate variables, making their explicit
enumeration impractical.
Importantly, this difficulty is distinct from the cost of evaluating the
symbolic computation on one complete assignment: a symbolic function
may be inexpensive to evaluate while the set of assignments leading to
a given output is much harder to represent or aggregate over.

Existing exact PNL approaches such as
DeepProbLog~\cite{NeSy:DeepProbLog:mostrecentjournalversion2021} address this problem
by constructing the logical provenance of a query, using knowledge compilation techniques, and subsequently
performing probabilistic inference over that representation.
However, constructing the complete provenance can itself become a bottleneck when
the number of alternative reasoning paths is large.
Approximate methods have consequently been proposed to extend PNL to
larger problem instances
\cite{NeSy:DeepProbLog:Manhaeve2021,NeSy:Scallop:Huang2021,NeSy:A-NeSI:vanKrieken2023}.
These methods trade exact inference for scalability in different ways.
This motivates a complementary question: can probabilistic inference
avoid constructing the complete provenance in the first place, while
retaining exact inference when feasible and providing explicit control
over the error when approximation is necessary?

\paragraph{Contribution.}
Our main contribution is the DPNL framework, a new probabilistic
reasoning architecture for end-to-end neurosymbolic learning, with two major aspects:

\begin{enumerate}[label=\roman*.]
    \item \textbf{Inference without full provenance
    materialization.}
Instead of constructing the complete logical provenance of a query before inference, DPNL performs a lazy exploration of partial assignments and dynamically eliminates regions of the implicit reasoning space that can be determined to succeed or fail. Only regions that cannot yet be decided are explored
further. This yields an exact inference procedure when exploration is continued to completion.

    \item \textbf{Certified approximation.}
When exact inference is not required, ApproxDPNL can stop this exploration early while maintaining certified bounds on the exact output probability. Specifically, it tracks the probability mass of unresolved regions and maintains certified lower and upper bounds on the exact output probability at any point during the search.
 
\end{enumerate}

An important architectural property of DPNL and ApproxDPNL is their ability to combine neural predictions over finite-domain random variables with diverse forms of symbolic reasoning. 
They separate probabilistic inference from the representation and
implementation of the symbolic component, allowing the same inference architecture to accommodate a broad range of symbolic representations and reasoning procedures. 
We formalize a generic oracle interface
that supports ordinary
executable symbolic programs, solver-backed logical specifications,
and domain-specific oracles, and establish conditions under which
oracle-guided inference is sound and terminating.

We implement DPNL and ApproxDPNL in a research prototype and evaluate them on several neurosymbolic tasks.
Experiments show that oracle-guided reasoning can substantially extend the range of tractable problem instances for probabilistic neurosymbolic learning.

\section{Problem Setting and Definitions}
\label{sec:problem}

A probabilistic neurosymbolic learning (PNL) problem combines a neural
component with a symbolic reasoning component, with the goal of learning
intermediate neural predictions from supervision on the final symbolic
output.

Given a fixed observed input \(i\) from an input space \(\mathcal I\), the neural
component predicts a distribution for each of \(m\) random
variables \(X_1(i), \ldots, X_m(i)\), where each \(X_k(i)\) takes values in
a finite domain \(V_k\). 
We write
\[
    \pi_{\theta,k}(x \mid i)
    = \prob\bigl(X_k(i) = x\bigr)
\]

for the predicted distribution of $X_k(i)$, where $\theta$ collects
all neural parameters. 
The neural component may consist of one or several neural networks.

The symbolic component is a deterministic function
\[
    \Sfunction : V_1 \times \cdots \times V_m
    \rightarrow \outputspace,
\]
which maps a joint assignment of the neural variables to an output
$o \in \outputspace$.

For instance, in the kinship example, the variables \(X_k(i)\) represent
uncertain relations extracted from the text, while \(S\) applies symbolic reasoning to these relations to infer the answer to the final kinship query.

For each fixed input \(i\), the random variables
\(X_1(i),\ldots,X_m(i)\) are assumed to be mutually independent. 
The probability assigned to an output \(o\) for input \(i\) is then
\begin{equation}
\begin{aligned}
p_\theta(o \mid i)
&:= \Pr\!\left(
    S(X_1(i),\ldots,X_m(i)) = o
    \right) \\
&= \sum_{\mathbf{x}:S(\mathbf{x})=o}
    \prod_{k=1}^{m}
    \pi_{\theta,k}(x_k \mid i),
\end{aligned}
\label{eq:theory:core}
\end{equation}
where \(\mathbf{x}=(x_1,\ldots,x_m)\).

Since $\Sfunction$ is deterministic and does not depend on $\theta$,
Eq.~\eqref{eq:theory:core} is a finite sum of products of neural
probabilities. Therefore, since the distributions
$\pi_{\theta,k}$ are differentiable with respect to $\theta$, then
$p_\theta(o \mid i)$ is differentiable as well.
This enables end-to-end learning from supervision on the final output,
without requiring labels for the intermediate variables $X_k(i)$.

\paragraph{Major computational challenge.}

Computing Eq.~\eqref{eq:theory:core} can nevertheless be expensive.
The set of assignments 
\[
    \bigl\{
        \mathbf{x} \in V_1 \times \cdots \times V_m
        \mid \Sfunction(\mathbf{x}) = o
    \bigr\}
\]
may be exponentially large in $m$.
Consequently, explicitly enumerating all assignments that contribute to the sum can become intractable.

The inference problem considered in this work is therefore to compute
$p_\theta(o \mid i)$ exactly, or to approximate it with certified bounds, without explicitly materializing the full set of satisfying assignments.

\section{DPNL: Oracle-Guided Inference}
\label{sec:dpnl}

We now introduce Dynamic Probabilistic Neurosymbolic Learning
(DPNL), an inference framework for computing the output probability
in Eq.~\eqref{eq:theory:core} without explicitly materializing the set of satisfying assignments. 
The key idea is to reason over \emph{regions} of the assignment
space rather than over complete assignments individually.

DPNL queries the symbolic component through an \emph{oracle} that,
for a region and a target output $o$, can return one of
three answers:
(i) every assignment in the region produces \(o\),
(ii) no assignment in the region produces \(o\), or
(iii) the available information is insufficient to decide.
In the first two cases, DPNL can account for or discard the entire
region without exploring its individual assignments. In the third case, DPNL further explores the region.

This oracle interface abstracts away the particular representation
of $\Sfunction$: DPNL only requires enough information to determine
whether a region can already be accepted or rejected.
The resulting split-or-prune process follows the same recursive
principle as DPLL-style search (an analogy intentionally reflected
in the name DPNL).
We now formalize these regions using valuations and state the conditions that an oracle must satisfy.

\subsection{Search Regions and Oracle Interface}

We formalize the regions explored by DPNL using \emph{valuations}.
A valuation $v$ of the variables $(X_k)_{1\leq k\leq m}$ is an
array of size $m$ such that, for every $k\in[1,m]$,
\[
    v[k] \in V_k \cup \{\unknown\}.
\]
The value $\unknown$ indicates that $X_k(i)$ has not yet been assigned.

We denote by $\setofvaluations{}$ the set of all valuations. A valuation $v$ is \emph{total} if
$v[k]\neq\unknown$ for every $k\in[1,m]$, and \emph{partial}
otherwise. 
A valuation $v'$ is a \emph{refinement} of $v$ if
\[
    \forall k\in[1,m],\qquad
    v[k]\neq\unknown \Longrightarrow v'[k]=v[k].
\]
Thus, $v'$ preserves all values fixed in $v$, while possibly fixing additional variables.

We denote by \(\tot{v}\) the set of all total refinements of \(v\).
We call \(\tot{v}\) the region represented by \(v\). %
The empty valuation $v_\emptyset = [\star,\ldots,\star]$
represents the whole assignment space, while a total valuation
represents a single complete assignment.

For $x\in V_k$, we denote by $v_{v[k]\leftarrow x}$ the valuation
obtained from $v$ by replacing $v[k]$ with $x$.

\label{section:oracle}

For a given valuation $v$ and output $o$ of $\Sfunction$, an \emph{oracle} is in charge of determining whether it is still relevant to explore refinements of $v$ or whether the exploration process can be stopped. 
Formally, an \emph{oracle} is a function
\[
    O_S :
    \setofvaluations{} \times \outputspace
    \rightarrow \{0,1,\unknown\}.
\]

Given a valuation $v$ and a target output $o$, the oracle returns
$1$ if it can establish that every total refinement of $v$ produces
$o$, $0$ if it can establish that no total refinement produces $o$,
and $\unknown$ otherwise. Intuitively, $\unknown$ indicates that the oracle cannot yet decide the region and further exploration is needed.

We call an oracle \emph{valid} when its definitive answers are sound
and it always decides total valuations.
Formally, $O_S$ is a valid
oracle for $\Sfunction$ if, for every
$v\in\setofvaluations{}$ and $o\in\outputspace$,
\begin{align}
    O_S(v,o)=1
    &\Longrightarrow
    \forall v' \in tot(v),\quad
    \Sfunction(v'[1],\ldots,v'[m]) = o,
    \label{eq:oracle:one}
    \\
    O_S(v,o)=0
    &\Longrightarrow
    \forall v' \in tot(v),\quad
    \Sfunction(v'[1],\ldots,v'[m]) \neq o.
    \label{eq:oracle:zero}
\end{align}
In addition, for every total valuation $v$, $O_S(v,o) \neq \unknown$.
Since $tot(v)=\{v\}$ when $v$ is total, the previous conditions imply
\[
    O_S(v,o)
    =
    \begin{cases}
        1 & \text{if } \Sfunction(v[1],\ldots,v[m]) = o,\\
        0 & \text{otherwise}.
    \end{cases}
\]

A valid oracle is \emph{complete} if the implications in
Eqs.~\eqref{eq:oracle:one}--\eqref{eq:oracle:zero} are equivalences.
Importantly, DPNL requires validity but not completeness:
returning $\unknown$ on a partial valuation only causes the
corresponding region to be explored further. Completeness therefore
affects pruning efficiency, but not correctness.

\subsection{Exact Inference with DPNL}

Algorithm~\ref{alg:DPNL} presents the core algorithm of the framework.  
\begin{algorithm}[htb]
\begin{small}
    \caption{DPNL($i,o,v,O_S$)}
    \label{alg:DPNL}
    \textbf{Input}: $i \in \inputspace$, $o \in \outputspace$, $v \in \setofvaluations{}$, and an oracle $O_S$ for $\Sfunction$. \\
\textbf{Output}: The probability of obtaining $o$ under $v$.\\
    \begin{algorithmic}[1]
        \IF {$O_S(v,o) = 1$}
            \STATE \textbf{return} $1$
        \ELSIF {$O_S(v,o) = 0$}
            \STATE \textbf{return} $0$
        \ELSE
            \STATE Choose $k \in [1,m]$ such that $v[k] = \unknown$. \label{alg:DPNL:choosek}
            \STATE \textbf{return} $\underset{y \in V_k}{\Sigma}\prob(X_k(i) = y)\label{alg:DPNL:useprob}\cdot\text{DPNL}(i,o,v_{v[k]\leftarrow y},O_S)$ 
        \ENDIF
    \end{algorithmic}
\end{small}
\end{algorithm}
Starting from the empty valuation
$v_\emptyset=[\unknown,\ldots,\unknown]$, DPNL computes the output probability in Eq.~\ref{eq:theory:core} as
\[
\begin{aligned}
    \mathrm{DPNL}(i,o,v_\emptyset,O_S)
    &=
    \prob\!\bigl(
        \Sfunction(X_1(i),\ldots,X_m(i)) = o
    \bigr)
    \\
    &=
    p_\theta(o\mid i).
\end{aligned}
\]

A recursive call terminates as soon as the oracle
returns $0$ or $1$. Otherwise, choosing an unassigned variable \(X_k(i)\)
partitions the current region into the regions represented by
its refinements, whose contributions are combined according to
\(\Pr(X_k(i)=y)\).

\begin{theorem}[DPNL termination and correctness]
\label{thm:dpnl}
For any input $i\in\inputspace$, output $o\in\outputspace$,
and valid oracle $O_S$ for $\Sfunction$,
$\mathrm{DPNL}(i,o,v_\emptyset,O_S)$ terminates and returns
$p_\theta(o\mid i)$.
\end{theorem}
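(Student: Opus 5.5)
We prove a stronger statement by induction on the number of unassigned variables of a valuation. For $v\in\setofvaluations{}$, let $u(v)=|\{k : v[k]=\unknown\}|$. The claim is that, for every valuation $v$, the call $\mathrm{DPNL}(i,o,v,O_S)$ terminates and returns
\[
    P(v) := \sum_{v'\in\tot{v}} \; \prod_{k:\,v[k]=\unknown} \pi_{\theta,k}(v'[k]\mid i),
\]
which is the probability that $\Sfunction(X_1(i),\ldots,X_m(i))=o$ conditioned on the assigned variables taking the values fixed in $v$. The theorem is the case $v=v_\emptyset$. There, $u(v_\emptyset)=m$, $\tot{v_\emptyset}=V_1\times\cdots\times V_m$, and the product ranges over all $k$. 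So $P(v_\emptyset)$ coincides with Eq.~\eqref{eq:theory:core}.

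\textbf{Base and pruning cases.} If $O_S(v,o)\in\{0,1\}$, the algorithm returns immediately, so termination is trivial.

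If the answer is $1$, validity Eq.~\eqref{eq:oracle:one} gives $\Sfunction(v')=o$ for every $v'\in\tot{v}$. Then $P(v)$ is the sum over all of $\tot{v}$ of the products of the free variables' probabilities. This sum factorizes as $\prod_{k:v[k]=\unknown}\sum_{y\in V_k}\pi_{\theta,k}(y\mid i)=1$, since each $\pi_{\theta,k}(\cdot\mid i)$ is a distribution.

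If the answer is $0$, Eq.~\eqref{eq:oracle:zero} makes the sum empty, so $P(v)=0$.

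When $u(v)=0$, validity guarantees $O_S(v,o)\neq\unknown$, so we are always in one of these two cases. This settles the base of the induction.

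\textbf{Inductive step.} Suppose $u(v)\ge 1$ and $O_S(v,o)=\unknown$. Line~\ref{alg:DPNL:choosek} can pick some $k$ with $v[k]=\unknown$, because $v$ is not total. Each child $v_{v[k]\leftarrow y}$ has $u=u(v)-1$. By induction, each of the finitely many ($|V_k|$) recursive calls terminates and returns $P(v_{v[k]\leftarrow y})$, so the current call terminates too.

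For correctness, we need the partition identity
\[
    \tot{v}=\biguplus_{y\in V_k}\tot{v_{v[k]\leftarrow y}},
\]
which holds because total refinements of $v$ are split according to their value at position $k$. Splitting the sum defining $P(v)$ along this partition and factoring out $\pi_{\theta,k}(y\mid i)$ from each block gives
\[
    P(v)=\sum_{y\in V_k}\pi_{\theta,k}(y\mid i)\,P(v_{v[k]\leftarrow y}).
\]
This is exactly what line~\ref{alg:DPNL:useprob} returns.

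The only delicate point is formulating the invariant $P(v)$ so that it is correct at every node, not just at the root. Once it is defined as the conditional probability given the fixed values, the proof reduces to two bookkeeping steps: the factorization $\sum\prod=\prod\sum=1$ in the accepting case, and the disjoint partition of $\tot{v}$ in the recursive case. Independence of the $X_k(i)$ is what justifies reading $P(v)$ as a conditional probability, but the algebra above uses only the product form in Eq.~\eqref{eq:theory:core}.
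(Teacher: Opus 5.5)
Your proof follows the paper's: the same invariant (the paper's completion probability $p_\theta(o\mid i;v)$), induction on the number of unassigned variables, the $\sum\prod=\prod\sum=1$ factorization when the oracle answers $1$, and the partition of $\tot{v}$ by the value of $X_k(i)$ when it answers $\unknown$. Folding the total-valuation base case into the $0/1$ pruning cases is only a cosmetic reorganization.

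There is, however, a slip you need to fix. Your displayed definition of $P(v)$ sums over all $v'\in\tot{v}$ and omits the condition $\Sfunction(v'[1],\ldots,v'[m])=o$. Taken literally, this gives $P(v)=\prod_{k:v[k]=\unknown}\sum_{y\in V_k}\pi_{\theta,k}(y\mid i)=1$ for every $v$. Three of your claims then fail:
\begin{itemize}
\item $P(v_\emptyset)$ does not coincide with Eq.~\eqref{eq:theory:core};
\item Eq.~\eqref{eq:oracle:zero} does not make the sum empty;
\item the invariant would assert that DPNL always returns $1$.
\end{itemize}
Your prose (``the probability that $\Sfunction(\ldots)=o$ conditioned on\ldots'') and your use of validity in the $0$ and $1$ cases show that you intend
\[
P(v)=\sum_{\substack{v'\in\tot{v}\\ \Sfunction(v'[1],\ldots,v'[m])=o}}\ \prod_{k:\,v[k]=\unknown}\pi_{\theta,k}(v'[k]\mid i).
\]
With this restriction restored, the partition of $\tot{v}$ restricts to the filtered sets, since $\Sfunction$ is evaluated pointwise. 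Every step of your argument then goes through and matches the paper's proof.
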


The proof is provided in the supplementary material.

Crucially, DPNL explores the assignment space lazily: regions are further explored only when the oracle cannot decide them. It therefore avoids
materializing the full query provenance before inference.

\subsection{Approximate Inference with ApproxDPNL}

ApproxDPNL, presented in Algorithm~\ref{alg:ApproxDPNL}, extends
this lazy search with early termination while maintaining certified bounds on the exact probability.

\begin{algorithm}[htb]
\caption{ApproxDPNL$(i,o,O_S,\stopfunction,\searchheuristic)$}
\label{alg:ApproxDPNL}

\textbf{Input:}
$i\in\inputspace$, $o\in\outputspace$, a valid oracle $O_S$ for
$\Sfunction$, a stopping condition $\stopfunction$, and a search heuristic $\searchheuristic$.

\textbf{Output:}
Bounds $(\low,\up)$ such that
$\low \leq p_\theta(o\mid i) \leq \up$.

\begin{algorithmic}[1]
    \STATE $\low,\up \gets 0,1$
    \STATE $Q \gets \{v_\emptyset\}$

    \WHILE{$Q\neq\emptyset$ and $\neg\stopfunction(\low,\up)$}
        \STATE $v \gets \searchheuristic(Q)$
        \STATE $Q \gets Q\setminus\{v\}$

        \STATE $\mathit{mass} \gets
        \displaystyle\prod_{k:v[k]\neq\unknown}
        \pi_{\theta,k}(v[k]\mid i)$

        \IF{$O_S(v,o)=1$}
            \STATE $\low \gets \low+\mathit{mass}$

        \ELSIF{$O_S(v,o)=0$}
            \STATE $\up \gets \up-\mathit{mass}$

        \ELSE
            \STATE Choose $k\in[1,m]$ such that $v[k]=\unknown$
            \STATE $Q \gets Q\cup
            \{v_{v[k]\leftarrow y}\mid y\in V_k\}$
        \ENDIF
    \ENDWHILE 
    \RETURN $(\low,\up)$
\end{algorithmic}
\end{algorithm}

ApproxDPNL maintains a set \(Q\) of valuations representing regions
whose contribution to the target output has not yet been resolved.
Throughout the search, it maintains a lower bound
$\low$ and an upper bound $\up$ such that $p_\theta(o\mid i) \in[low,up]$. 
The lower bound $\low$ is the probability mass already certified
to produce $o$, while $\up$ additionally includes all unresolved
probability mass, which may still contribute to the output.
Equivalently, $1-\up$ is the probability mass already certified
not to produce $o$. Hence, $\up-\low$ is precisely the unresolved
probability mass, and at every iteration, the interval tightens:
$\low \leq p_\theta(o\mid i) \leq \up$.

\paragraph{Stopping condition.} A function $\stopfunction$ determines how tightly the returned
interval must bound $p_\theta(o\mid i)$ before inference terminates.
If $\stopfunction$ never triggers, all regions are eventually resolved and
$\low=\up=p_\theta(o\mid i)$, recovering exact inference.
Different stopping conditions provide different approximation guarantees.

\emph{Additive approximation.} 
For a desired absolute error $\epsilon>0$  with respect to the exact probability \(p_\theta(o\mid i)\), the search can stop when
$\up-\low\leq 2\epsilon$. The midpoint
$\hat p=(\low+\up)/2$ then satisfies
$|\hat p-p_\theta(o\mid i)|\leq\epsilon$.

\emph{Multiplicative approximation.}
For $\epsilon>0$, $\hat p$ is a multiplicative $\epsilon$-approximation
when
$p_\theta(o\mid i)/(1+\epsilon)\leq\hat p
\leq(1+\epsilon)p_\theta(o\mid i)$.
When $\low>0$, stopping when
$\up\leq\low(1+\epsilon)^2$ and taking
$\hat p=\sqrt{\low\up}$ provides this guarantee.
This interval-to-estimate conversion follows~\cite{WMC:Approx:Dubray2024}.

\emph{Anytime inference.}
Without prescribing an error tolerance, ApproxDPNL may instead be interrupted under any computational budget:
the returned interval remains certified at any time.
Its midpoint has absolute error at most $(\up-\low)/2$; when
$\low>0$, the geometric mean provides a multiplicative guarantee with
$\epsilon=\sqrt{\up/\low}-1$.

\paragraph{Search heuristic.}
The search heuristic $\searchheuristic$ determines which unresolved region is
explored next by selecting a valuation \(v\) from \(Q\).
The choice of the next unassigned variable \(X_k(i)\) can likewise
follow a fixed ordering or a problem-specific branching heuristic.
These choices affect how quickly the bounds tighten, but not their
validity.

\subsection{Designing and Combining Oracles}
\label{sec:dpnl:oracles}

Both DPNL and ApproxDPNL rely on the same oracle interface.
Any symbolic function $\Sfunction$ that can be evaluated on complete valuations admits a valid oracle: one may simply return $\unknown$ on every partial valuation and evaluate $\Sfunction$ once the valuation is total. 
Hence, the existence of an oracle does not restrict the applicability of DPNL. 
Oracles determine how efficiently the search space can be explored. 

In practice, oracle design involves a trade-off between the cost of
each oracle call and its \emph{pruning power}, i.e., how early and how
much of the remaining search space it can resolve. A stronger oracle
may avoid many recursive calls, but may itself be more expensive to
evaluate. In particular, completeness maximizes the ability to decide
partial valuations, but does not imply computational efficiency.

\paragraph{Basic oracles.}
When an executable implementation of $\Sfunction$ is available, a basic oracle can be obtained automatically by running it on a partial valuation. 
If the execution attempts to access an unassigned variable,
the oracle returns $\unknown$; otherwise, the computation terminates without depending on the remaining variables and its output determines whether to return $0$ or $1$. 
Basic oracles therefore inherit the computational efficiency of the implementation of $\Sfunction$, but need not be complete.

\paragraph{Solver-backed oracles.}
When $\Sfunction$ is expressed in a decidable logic, a solver can be used to determine whether all completions of the current partial valuation produce the target output $o$, or whether none of them do.
Such oracles can provide substantially stronger pruning than direct program execution. 
We give in the supplementary material a generic construction for logic-based PNL systems, together with sufficient conditions under which the resulting
oracle is valid and complete.

\paragraph{Custom and combined oracles.}
Domain knowledge can also be used to implement specialized valid oracles. For example, in relational reasoning tasks such as CLUTRR, an oracle can propagate the relations already fixed in a partial valuation. Knowing that $A$ is the sister of $B$ and $B$ the father of $C$,
for instance, may already determine that $A$ is the aunt of $C$, independently of the remaining unknown relations.

Different valid oracles can moreover be combined safely. Given $O_1,\ldots,O_p$, DPNL may query them in sequence and return the first answer different from $\unknown$. 
The resulting oracle remains valid;
if one of the queried oracles is complete, the combination is complete as well. 
This makes it possible to query inexpensive oracles first and fall back to stronger, more costly reasoning only when needed.

\section{Experimental Evaluation}

We conduct experiments on symbolic and neurosymbolic reasoning tasks in order to bring elements of answers to the following research questions:

\textbf{Q1:} How does DPNL compare to other exact probabilistic reasoning systems? 

\textbf{Q2:} In a neurosymbolic setting, how does the DPNL approach compare to SOTA PNL systems with exact and approximate reasoning in terms of efficiency and accuracy? 

\textbf{Q3:} How do the parameters of ApproxDPNL, the per-sample timeout of the anytime approximation and the error budget $\epsilon$ of the $\epsilon$-approximation, trade accuracy against runtime?

We first report on the experimental methodology and results for symbolic reasoning tasks, followed by neurosymbolic evaluations and an analysis of the $\epsilon$ and anytime timeout parameters of ApproxDPNL.
 
\subsection{Symbolic reasoning (Q1)}

We compare DPNL to ProbLog a state-of-the-art system with exact probabilistic inference used  in  DeepProbLog. 
We use the two following probabilistic reasoning tasks.

We first evaluate DPNL on the popular MNIST-N-SUM task. 
By extending from an N digit sum to an  N+1 digit sum, the complexity of reasoning increases exponentially. 
Thus, the multi-digit MNIST-N-SUM task is well-suited for exploring the scalability  of PNL systems. Each digit is represented as a probability distribution over $\{0, \ldots, 9\}$, simulating uncertainty in perception.
Here, we randomly generate the digit distributions. 
The symbolic function $S$ performs digit by digit integer addition and verifies the equality with the target.
The following oracles are compared on this task.
The \emph{basic oracle} is built on the digit-by-digit addition implementation for $S$.
The \emph{logic oracle} is generated from the Boolean logic representation of the N-digit addition.
The \emph{combined oracle} uses the sequential combination of the basic and logic oracles.

We also evaluate DPNL on a graph reachability task. This involves estimating the probability that a path exists between a source and a target node in a probabilistic graph.
We consider a graph with $N$ nodes, represented by an adjacency matrix containing $N^2$ potential edges.
Each edge is assumed to be independently present with a certain probability.
The symbolic function $S$ is a reachability predicate, implemented using a depth-first search (DFS) algorithm.
Probabilities are randomly assigned to the edges, and the existence of a path from the source to the destination node is evaluated.

\begin{figure}[htb]
        \centering
        \includegraphics[width=0.85\linewidth]{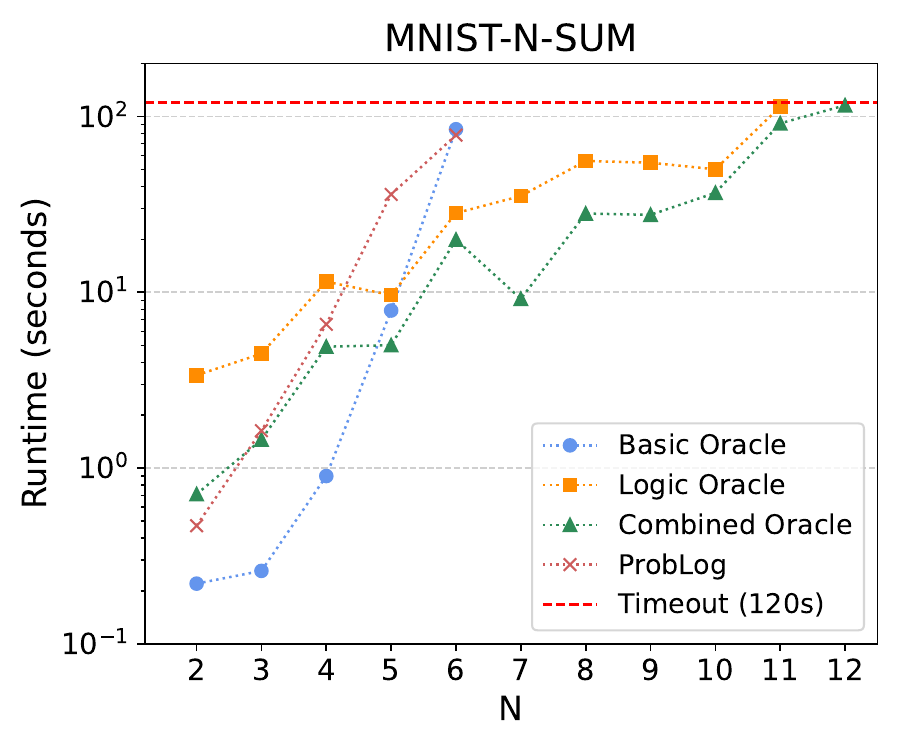}
        \vspace{-0.2cm}
        \caption{Q1 Runtimes (in seconds) for MNIST-N-SUM.}
    \label{fig:symbolic_runtimes:mnistnsum}
\end{figure}

\begin{figure}[htb]
    \centering
        \includegraphics[width=0.85\linewidth]{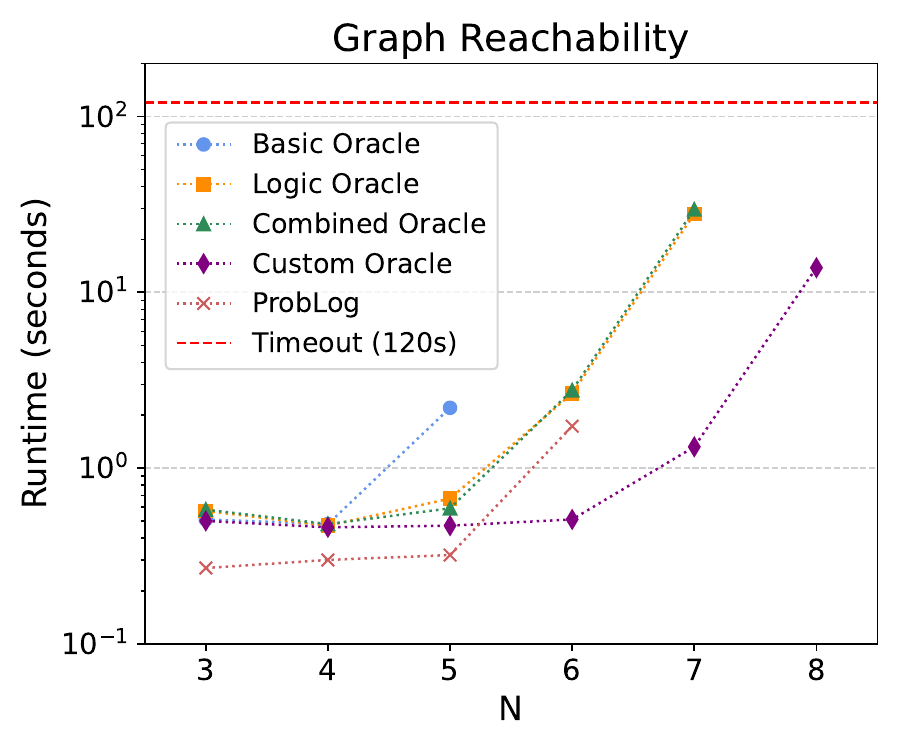}
    \vspace{-0.2cm}
    \caption{Q1 Runtimes (in seconds) for graph reachibility.}
    \label{fig:symbolic_runtimes:graphreach}
\end{figure}

\paragraph{Experimental results.}
Figures~\ref{fig:symbolic_runtimes:mnistnsum} and \ref{fig:symbolic_runtimes:graphreach}  show the time spent on symbolic reasoning for MNIST-N-SUM and graph reachability, respectively.
 On MNIST-N-SUM, results show that DPNL scales better than ProbLog, solving task instances for $N\geq 7$ where ProbLog fails with a timeout.
Results also show that the logic oracle, although slower on small instances, scales better than the basic oracle.

Regarding graph reachability, for small instances (up to $N=5$) Problog is faster than DPNL.
However, for $N\geq 7$, Problog fails with timeout, whereas DPNL successfully completes with the logic and combined oracles. 
Results also illustrate that DPNL offers the possibility to scale even further given some manual programming effort with a custom oracle.

\subsection{Application to neurosymbolic AI (Q2)}
We now evaluate DPNL on a neuro-symbolic benchmark suite, including MNIST-N-Sum, Visual Sudoku and Leaf classification. 
Table~\ref{tab:benchmark_overview} summarizes the difficulty of the task (regarding the symbolic search space) and the versions of the algorithm (exact, $\epsilon$-approximation or anytime approximation) that are evaluated on each benchmark. 
More details on hyperparameter values, timeout thresholds and information on hardware and baseline configurations are given in the supplementary material.

\begin{table}[t]
    \centering
    \small
    \setlength{\tabcolsep}{4pt}
    \renewcommand{\arraystretch}{1.25}
    \begin{tabular}{@{}ll ccc@{}}
        \toprule
        & & \multicolumn{3}{c}{\textbf{Reasoning}} \\
        \cmidrule(lr){3-5}
        \textbf{Benchmark} & \textbf{Difficulty}
            & Exact & \shortstack{$\epsilon$-\\approx.} & \shortstack{Anytime\\approx.} \\
        \midrule
        Leaf           & $120$                  & \checkmark &            &            \\
        MNIST-$N$-SUM   & $10^{2N}$              & \checkmark & \checkmark & \checkmark \\
        Visual Sudoku  & $10^{N\times N}$        &            &            & \checkmark \\
        \bottomrule
    \end{tabular}
    \caption{Overview of the neurosymbolic benchmarks.
    \emph{Difficulty} is the size of the symbolic search space
    The \emph{Reasoning} columns mark which DPNL inference modes are evaluated on each benchmark.
    }
    \label{tab:benchmark_overview}
\end{table}

\subsubsection{Leaf Classification}
We use the LEAF-ID dataset built from the plant-leaf collection of \citet{leaf:Kanda2021} which includes leaf images of $11$ plant species, with $330$ training and $110$ test samples.
Each leaf is described by three \emph{discrete} features (\textsc{margin}, \textsc{shape} and \textsc{texture}) with $6$, $5$ and $4$ possible values respectively.
A neural model predicts a distribution over the values of each feature from the image, and the symbolic function $S$ maps the feature values to a species.
$S$ is a fixed decision tree over the species. 
Because the tree has a bounded number of paths, the reasoning difficulty does not scale with any instance-size parameter. 
There are at most $6 \times 5 \times 4 = 120$ feature combinations.
The oracle evaluates the decision tree on a partial valuation and returns the species as soon as the branch is determined, usually without reading all three features.
DPNL is used with exact reasoning on this benchmark.
DPNL is trained for $50$ epochs with the Adam optimiser (learning rate $1\mathrm{e}{-4}$, batch size $4$), and all methods are evaluated over $10$ independent runs.
We compare DPNL against two families of baselines.
The \emph{neurosymbolic} baselines include DeepProbLog \cite{NeSy:DeepProbLog:Manhaeve2018}, Scallop \cite{NeSy:Scallop:Huang2021} in two configurations (top-$k$ provenance and exact provenance, A-NeSI \cite{NeSy:A-NeSI:vanKrieken2023}, and ISED \cite{NeSy:ISED:SolkoBreslin2024}.
The \emph{sampling-based gradient-approximation} baselines REINFORCE \cite{RL:REINFORCE:Williams1992} and NASR \cite{NeSy:NASR:Cornelio2023} treat the tree as a black box and estimate the gradient from sampled neural predictions
\paragraph{Results.}
The results are given in Table~\ref{tab:leaf}.
\begin{table}[t]
    \centering
    \small
    \setlength{\tabcolsep}{5pt}
    \renewcommand{\arraystretch}{1.2}
    \begin{tabular}{@{}lccc@{}}
        \toprule
        \textbf{Method} & \textbf{Accuracy} & \textbf{Std} & \textbf{Time} \\
        \midrule
        DPNL (exact)      & \textbf{0.8080} & 0.0360 & 10:15 \\
        \midrule
        \multicolumn{4}{@{}l}{\emph{Neurosymbolic (probabilistic inference)}} \\
        DeepProbLog       & 0.4327 & 0.0308 & 1:02:28 \\
        Scallop (top-$k$) & 0.7709 & 0.0284 & 5:30 \\
        Scallop (exact)   & 0.7691 & 0.0294 & 55:34 \\
        A-NeSI            & 0.7291 & 0.0909 & 21:33 \\
        ISED              & 0.7964 & 0.0261 & 18:03 \\
        \addlinespace[2pt]
        \multicolumn{4}{@{}l}{\emph{Sampling-based gradient approximation}} \\
        REINFORCE         & 0.7955 & 0.0357 & 6:52 \\
        NASR              & 0.1009 & 0.0155 & \textbf{2:03} \\
        \bottomrule
    \end{tabular}
    \caption{Results on the Leaf benchmark.
    Average test accuracy, standard deviation, and \emph{total} wall-clock runtime (\texttt{h:mm:ss}), over $10$ independent runs.
    Highest accuracy and lowest runtime in bold.
    }
    \label{tab:leaf}
\end{table}
DPNL with exact reasoning obtains the highest mean accuracy ($0.808$) on Leaf.
Among the exact methods it is also the fastest: $10$:$15$ against $55$:$34$ for Scallop with $k=120$ and $1$:$02$:$28$ for DeepProbLog, whose accuracy drops to $0.43$.
ISED and REINFORCE come close to DPNL ($0.796$), whereas NASR falls to chance level ($0.10 \approx 1/11$).

\subsubsection{MNIST-N-Sum}
MNIST-$N$-SUM \cite{NeSy:DeepProbLog:Manhaeve2018} is the task of predicting the sum of two $N$-digit numbers written as sequences of handwritten-digit images, supervised only by the value of the sum and never by the individual digit labels (e.g.\ a pair of two-digit images reading $28 + 35 = 63$).
We evaluate $N=\{2, 3, 4, 8\}$.
DPNL is run in three modes: exact inference, $\epsilon$-approximation, and anytime approximation.
Training uses Adam with learning rate $1\mathrm{e}{-3}$ and batch size $2$. 
The accuracy numbers are averaged over $10$ seeds.
Per-mode and per-$N$ hyper-parameters are listed in Table~\ref{tab:hyperparameters} in the appendix.
We compare DPNL (with the basic oracle) with the exact neurosymbolic methods DeepProbLog and Scallop \cite{NeSy:Scallop:Huang2021} and with A-Nesi \cite{NeSy:A-NeSI:vanKrieken2023}, DPLA$^*$ \cite{NeSy:DeepProbLog:Manhaeve2021} and Scallop (with the top-k semiring), which use approximate inference.
A-Nesi is evaluated across its different variants (predict only, explain, prune). 
We use the MNIST dataset from \cite{Article:LeCun:MNIST}.
Each experiment is performed with multiple independent runs, and we report the average test accuracy and the average reasoning time per sample during training.
The parameters for $\epsilon$ and timeout for ApproxDPNL have been chosen based on parameter tuning to find a good balance between accuracy and runtime, see Section~\ref{sec:ablation_studies}.
\paragraph{Results}
Among the other exact PNL systems, DeepProbLog matches DPNL's accuracy at $N=2$ but no longer finishes for $N \geq 3$, and Scallop (exact) already times out at $N=2$.
DPNL in exact mode is the only method that completes $N=4$ (avg.\ accuracy $0.83$) and times out only at $N=8$.
The $\epsilon$- and anytime-approximation modes preserve accuracy while improving scalability.
The anytime mode matches exact inference to within one point at every $N$ while cutting the per-sample symbolic time by $5$ to $23\times$ (e.g.\ $0.27$\,s vs.\ $6.1$\,s at $N=4$), and it is the only method that runs at all at $N=8$ within the timeout threshold. 
The $\epsilon$-approximation behaves similarly.
It stays within one point of exact inference ($0.9062$, $0.8730$, $0.8219$ for $N = 2,3,4$), and at $N=4$ its per-sample symbolic time is $0.67$\,s which is $9\times$ below exact, between the anytime mode ($0.27$\,s) and exact ($6.1$\,s).
Comparing exact and approximate inference under the same training budget, a clear accuracy gap opens between the DPNL modes and the approximate baselines from $N=3$ on.
DPNL reaches $\approx 0.87$ at $N=3$ versus at most $0.84$ for the baselines, and at $N=4$ only A-NeSI among the baselines finishes, its best variant (explain) reaching $0.66$ versus $0.83$ for DPNL.
This gap should be read as a difference in data efficiency under a short training budget rather than as a bound on the accuracy that approximate methods can reach, since A-NeSI is reported to reach higher accuracies with substantially more training epochs \cite{NeSy:A-NeSI:vanKrieken2023}.
At $N=8$, all three A-NeSI variants terminate and reach $0.702 \pm 0.025$ (predict), $0.702 \pm 0.031$ (explain) and $0.684 \pm 0.021$ (prune), close to the $0.723 \pm 0.257$ of anytime DPNL, the only DPNL mode that finishes at this scale.
A-NeSI keeps its speed advantage: its per-sample reasoning time is $0.0136$\,s (predict), $0.0349$\,s (explain) and $0.0340$\,s (prune), the same order of magnitude as for $N \le 4$ and more than $50\times$ below the $1.94$\,s of anytime DPNL. 

\begin{table*}[t]
    \centering
    \renewcommand{\arraystretch}{1.2}
    \resizebox{\textwidth}{!}{
    \begin{footnotesize}
    \begin{tabular}{|l||c|c|c||c|c|c||c|c|c||c|c|c|}
    \hline
    \textbf{} & \multicolumn{3}{c||}{\textbf{$\mathbf{N=2}$}} & \multicolumn{3}{c||}{\textbf{$\mathbf{N=3}$}} & \multicolumn{3}{c||}{\textbf{$\mathbf{N=4}$}} & \multicolumn{3}{c|}{\textbf{$\mathbf{N=8}$}} \\
    \hline
                    & Acc & Std & Time & Acc & Std & Time & Acc & Std & Time & Acc & Std & Time \\
    \hline
    \textbf{DPNL (exact)}        & 0.9100 & 0.0161 & 0.0301 & \textbf{0.8735} & 0.0168 & 0.4783 & \textbf{0.8320} & 0.0209 & 6.1344 & T/O    & T/O    & T/O \\
    \textbf{DPNL ($\epsilon$)}   & 0.9062 & 0.0124 & 0.0061 & 0.8730 & 0.0216 & 0.0430 & 0.8219 & 0.0243 & 0.6656 & T/O    & T/O    & T/O \\
    \textbf{DPNL (anytime)}      & \textbf{0.9115} & 0.0118 & 0.0063 & 0.8636 & 0.0157 & 0.0271 & 0.8266 & 0.0367 & 0.2654 & \textbf{0.7227} & 0.2566 & 1.9446 \\
    \hline
    \hline
    \textbf{DeepProbLog}      & 0.9098 & 0.0122 & 0.4784 & T/O & T/O & T/O & T/O & T/O & T/O & T/O & T/O & T/O \\
    \textbf{Scallop (exact)}  & T/O & T/O & T/O & T/O & T/O & T/O & T/O & T/O & T/O & T/O & T/O & T/O \\
    \hline
    \textbf{DPL A*}  & 0.8945 & 0.0146 & 0.0255 & 0.8365 & 0.0303 & 0.1237 & T/O & T/O & T/O & T/O & T/O & T/O \\
    \textbf{Scallop (top-k)}  & 0.8257 & 0.2060 & 0.0244 & 0.7523 & 0.2612 & 0.9521 & T/O & T/O & T/O & T/O & T/O & T/O \\
    \textbf{A-Nesi (predict)}$^{\dagger}$  & 0.8849 & 0.0142 & 0.0265 & 0.7796 & 0.0074 & \textbf{0.0170} & 0.5929 & 0.0394 & \textbf{0.0221} & 0.7023 & 0.0248 & \textbf{0.0136} \\
    \textbf{A-Nesi (explain)}$^{\dagger}$  & 0.8952 & 0.0071 & 0.0213 & 0.7707 & 0.0179 & 0.0219 & 0.6584 & 0.0239 & 0.0268 & 0.7016 & 0.0309 & 0.0349 \\
    \textbf{A-Nesi (prune)}$^{\dagger}$  & 0.7866 & 0.2225 & \textbf{0.0195} & 0.7706 & 0.0214 & 0.0206 & 0.5905 & 0.0339 & 0.0245 & 0.6841 & 0.0207 & 0.0340 \\
    \hline
    \end{tabular}
    \end{footnotesize}
    }
    \caption{Results on the MNIST-$N$-SUM task: average test accuracy (Acc), its standard deviation (Std), and average reasoning time per sample during training (Time, in seconds), over several independent runs.
    Highest accuracy and lowest runtime in bold.
    $^{\dagger}$It is worth noting that \citet{NeSy:A-NeSI:vanKrieken2023} report higher accuracies for A-NeSI ($0.960$ for $N=2$ and $0.926$ for $N=4$) with $100$ training epochs and batch size $16$ on a GPU.
    We instead train A-NeSI with training parameters similar to those of the other methods, to keep accuracies and reasoning times comparable across systems.
    For $N=8$, all methods are trained for more epochs, which explains why A-NeSI reaches a higher accuracy at $N=8$ than at $N=4$.
    }
    \label{tab:performance_comparison}
    \vspace{-0.3cm}
\end{table*}

\subsubsection{Visual Sudoku}
We use the Visual Sudoku Puzzle Classification dataset of \citet{NeSy:ViSudo:Augustine2022}.
The task is to decide whether an $N \times N$ grid of digit images forms a \emph{valid} Sudoku, that is, whether no digit occurs twice in any row, any column, or any $\sqrt{N}\times\sqrt{N}$ block. 
We consider $N \in \{4, 9\}$.
Each cell is perceived by the same MNIST classifier as in the MNIST-$N$-SUM task.
In DPNL, each of the $N^2$ cells is a random variable over $\{0,\dots,9\}$, so a naive PWMC enumeration of the possible grids would consider $10^{N\times N}$ total valuations.
The oracle prunes this search space as follow. 
Given a partial valuation (a partially filled grid), it returns $0$ (invalid) as soon as a duplicate appears in any row, column or block, without waiting for the remaining cells to be assigned.
We test DPNL in its exact, anytime and epsilon mode. 
We compare DPNL to a CNN as neural baseline and to the neuro-symbolic methods NeuPSL \cite{Pyror2023} and A-Nesi. 
As for MNIST, the parameters for epsilon and anytime result from parameter tuning, see Section~\ref{sec:ablation_studies}. 
For N9, we use a pretrained classifier. 
For better comparability, we also evaluate A-Nesi with the same pretrained classifier on N9. 
The results for the CNN, A-Nesi (without pretraining) and NeuPSL are taken from \cite{NeSy:A-NeSI:vanKrieken2023}.
\paragraph{Results.}
Table~\ref{tab:vsudoku} reports the results.
At $N=4$, exact DPNL reaches $0.912$ without classifier pretraining, slightly above the best baselines trained in the same setting (A-NeSI, $0.898$; NeuPSL, $0.897$).
Both approximations stay close to exact inference ($0.910$ for the $\epsilon$-approximation and $0.890$ for anytime) while reducing the wall-clock time per run from about $45$ to $12$ and $5$\,min, respectively.
At $N=9$, the search space grows to $10^{81}$ grids and the exact inference baseline times out, whereas anytime DPNL still trains and reaches $0.757$, the highest accuracy of all methods, ahead of pretrained A-NeSI ($0.723$) that also profits from the pretraining of the classifier.

\begin{table}[t]
    \centering
    \footnotesize
    \setlength{\tabcolsep}{4pt}
    \begin{tabular}{@{}lcc@{}}
        \toprule
        \textbf{Method} & \textbf{$N=4$} & \textbf{$N=9$} \\
        \midrule
        DPNL (exact)               & $\mathbf{0.9115 \pm 0.0111}$ & T/O                  \\
        DPNL ($\epsilon$-approx.)  & $0.9100 \pm 0.0191$ & T/O                  \\
        DPNL (anytime)             & $0.8895 \pm 0.0297$ & $\mathbf{0.7570 \pm 0.0172}$ \\
        \midrule
        CNN                        & $0.5150 \pm 0.0334$ & $0.5120 \pm 0.0220$ \\
        NeuPSL \cite{Pyror2023}    & $0.8970 \pm 0.0220$ & $0.5150 \pm 0.0137$ \\
        A-NeSI (symbolic pred.)    & $0.8970 \pm 0.0208$ & $0.6215 \pm 0.0208$ \\
        A-NeSI (neural pred.)      & $0.8980 \pm 0.0211$ & $0.6225 \pm 0.0220$ \\
        A-NeSI (pretrained)        & -- & $0.7230 \pm 0.0214$ \\
        \bottomrule
    \end{tabular}
    \caption{Results on Visual Sudoku: mean test accuracy $\pm$ standard deviation over $10$ runs.
    Highest accuracy in bold.
    Baselines are taken from \citet{NeSy:A-NeSI:vanKrieken2023}, except A-NeSI (pretrained), which was ran on the same hardware.
    T/O denotes timeout.
    Wall-clock time per DPNL run is about $45$, $12$ and $5$\,min at $N=4$ (exact, $\epsilon$-approx., anytime) and $1$h\,43m to $3$h\,33m at $N=9$.}
    \label{tab:vsudoku}
\end{table}

\subsection{Analysis of the $\epsilon$ and Anytime Timeout Parameters (Q3)}
\label{sec:ablation_studies}
We study how the two parameters of ApproxDPNL trade accuracy against runtime and consider the per-sample timeout $t$ of the anytime approximation and the error budget $\epsilon$ of the $\epsilon$-approximation.

\paragraph{MNIST-$N$-SUM.}
Figure~\ref{fig:appendix:anytime-tuning} shows test accuracy and total runtime as a function of $t$ for $N \in \{2,3,4\}$.
Accuracy behaves like a threshold: it stays at $0$ as long as $t$ is too small for most samples to be resolved, then jumps to a plateau (about $0.90$ for $N=2$, $0.84$ to $0.89$ for $N=3$ and $0.83$ to $0.85$ for $N=4$) that larger timeouts barely improve.
Runtime, in contrast, grows roughly linearly with $t$, e.g.\ from $1$h\,20m at $t=0.5$\,s to $7$h\,57m at $t=3$\,s for $N=4$.
The best operating point therefore lies just above the threshold.
Figure~\ref{fig:appendix:epsilon-tuning} shows the analogous experiment over $\epsilon$ for $N \in \{1,2,3,4\}$.
Runtime drops steeply as $\epsilon$ grows, while accuracy stays close to that of exact inference up to a breaking point that decreases with the task size (about $\epsilon=0.6$ for $N=2$, $0.5$ for $N=3$ and $0.4$ for $N=4$) and collapses beyond it.
At $N=4$, exact inference and all budgets $\epsilon \le 0.2$ exceed the $4$\,h limit, so the $\epsilon$-approximation is what makes this instance tractable.
Larger instances thus tolerate less approximation, but they are also those that benefit most from it.

\begin{figure*}[t]
    \centering
    \begin{subfigure}{0.32\textwidth}
        \centering
        \includegraphics[width=\linewidth]{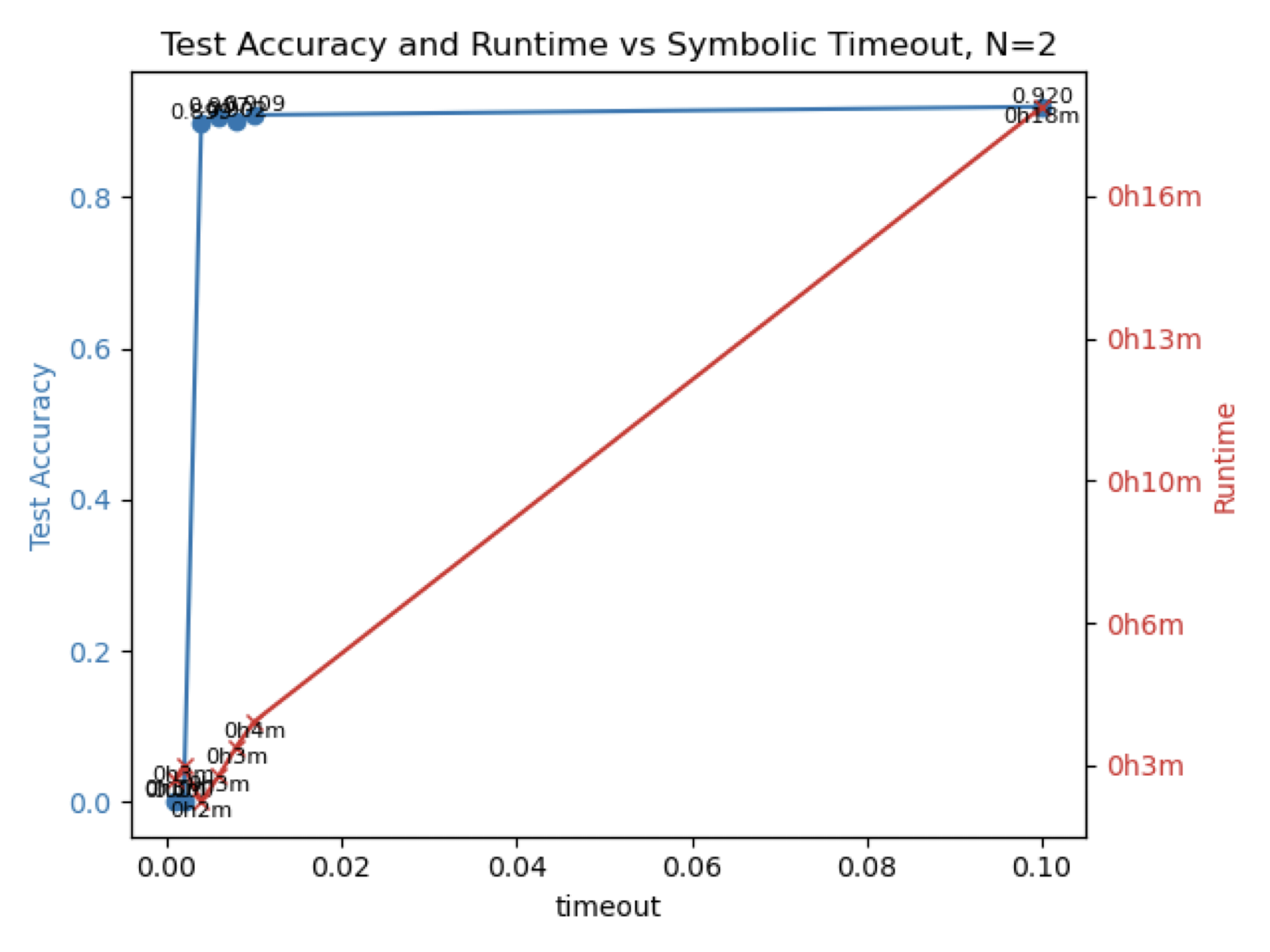}
        \caption{$N=2$}
    \end{subfigure}\hfill
    \begin{subfigure}{0.32\textwidth}
        \centering
        \includegraphics[width=\linewidth]{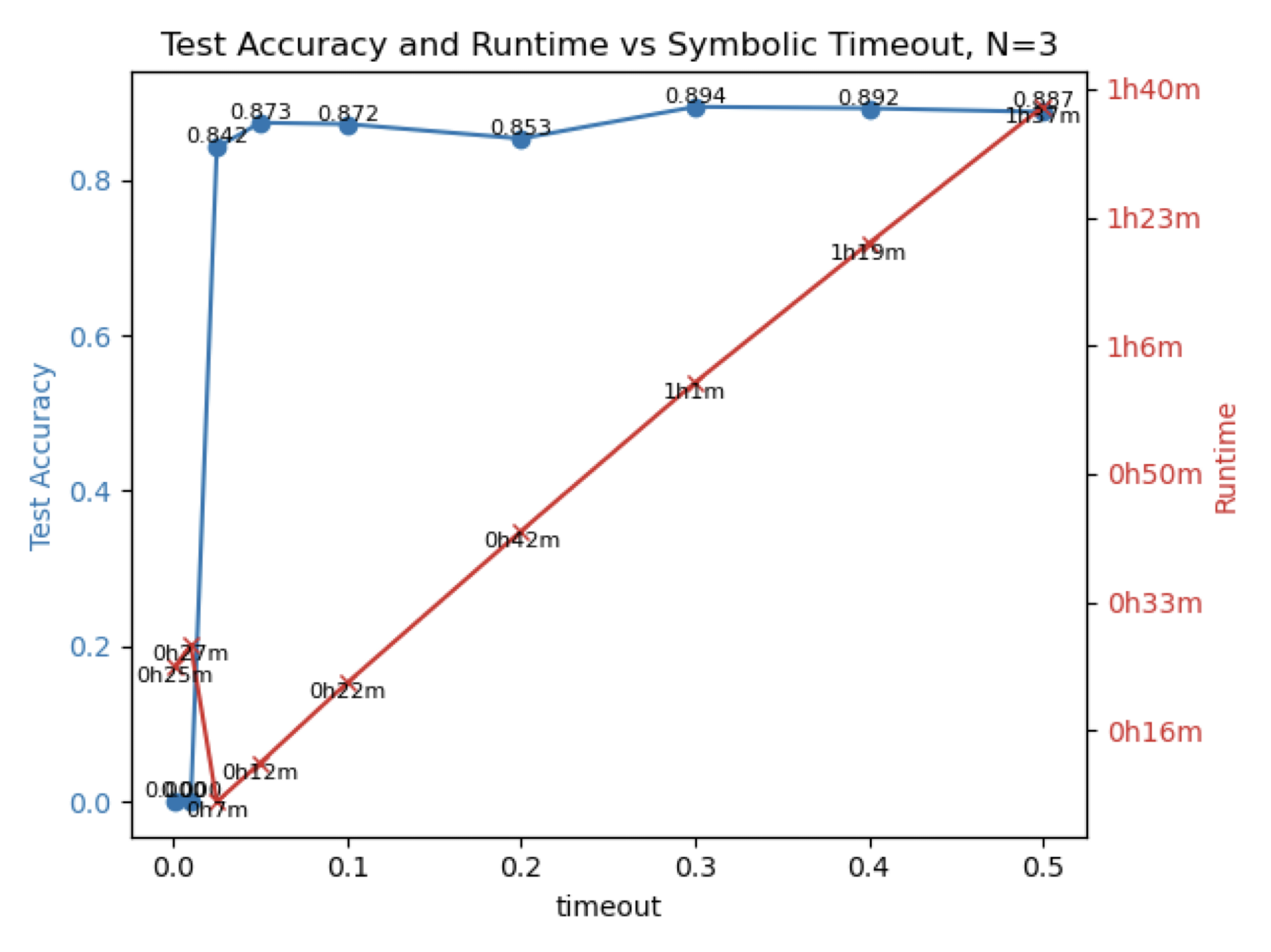}
        \caption{$N=3$}
    \end{subfigure}\hfill
    \begin{subfigure}{0.32\textwidth}
        \centering
        \includegraphics[width=\linewidth]{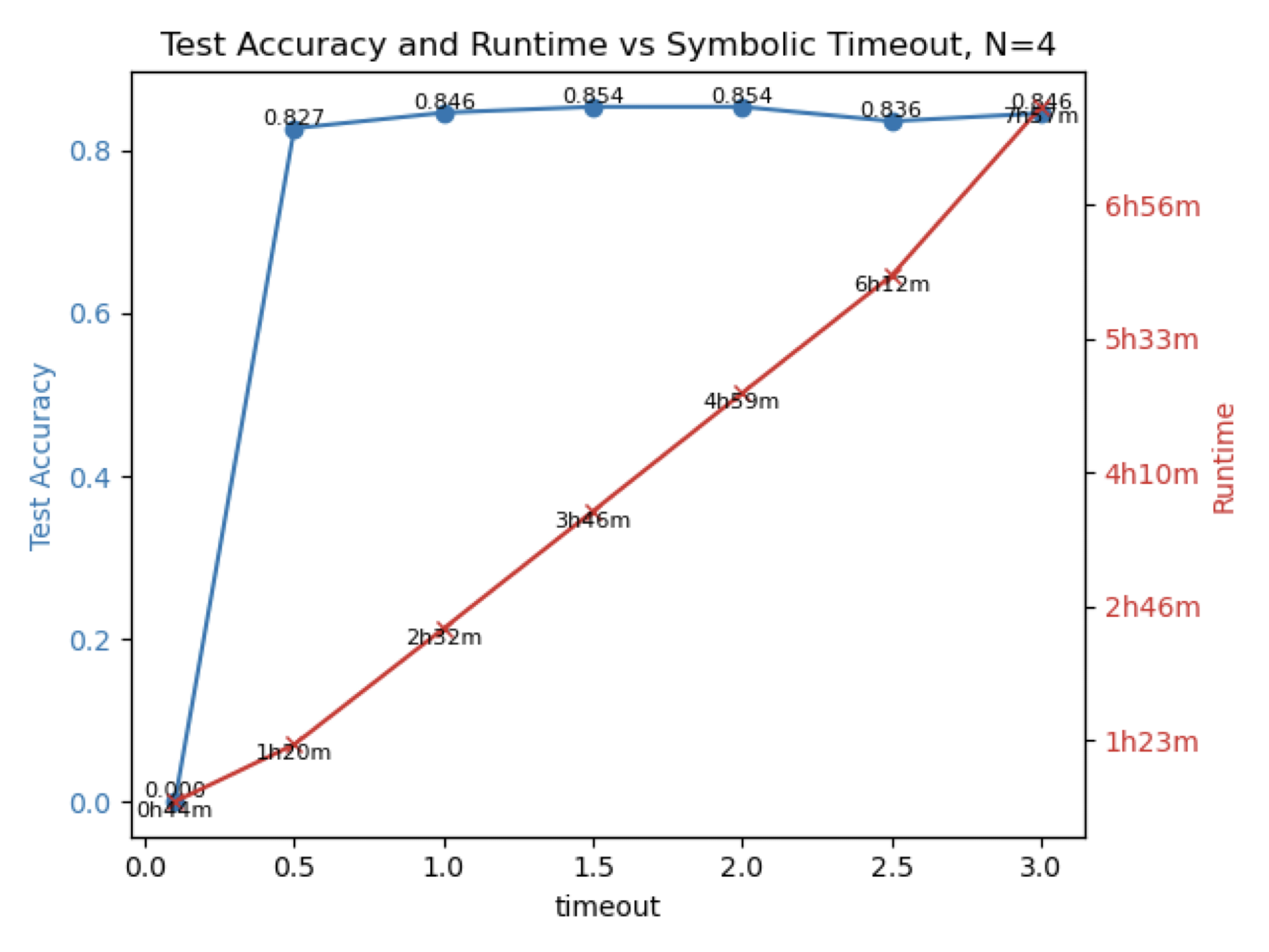}
        \caption{$N=4$}
    \end{subfigure}
    \caption{Test accuracy (blue, left axis) and total runtime (red, right axis) as a function of the per-sample symbolic timeout $t$ (in seconds) for the anytime approximation on MNIST-$N$-SUM.
    No sweep was run for $N=8$.}
    \label{fig:appendix:anytime-tuning}
\end{figure*}

\begin{figure*}[t]
    \centering
    \begin{subfigure}{0.49\textwidth}
        \centering
        \includegraphics[width=\linewidth]{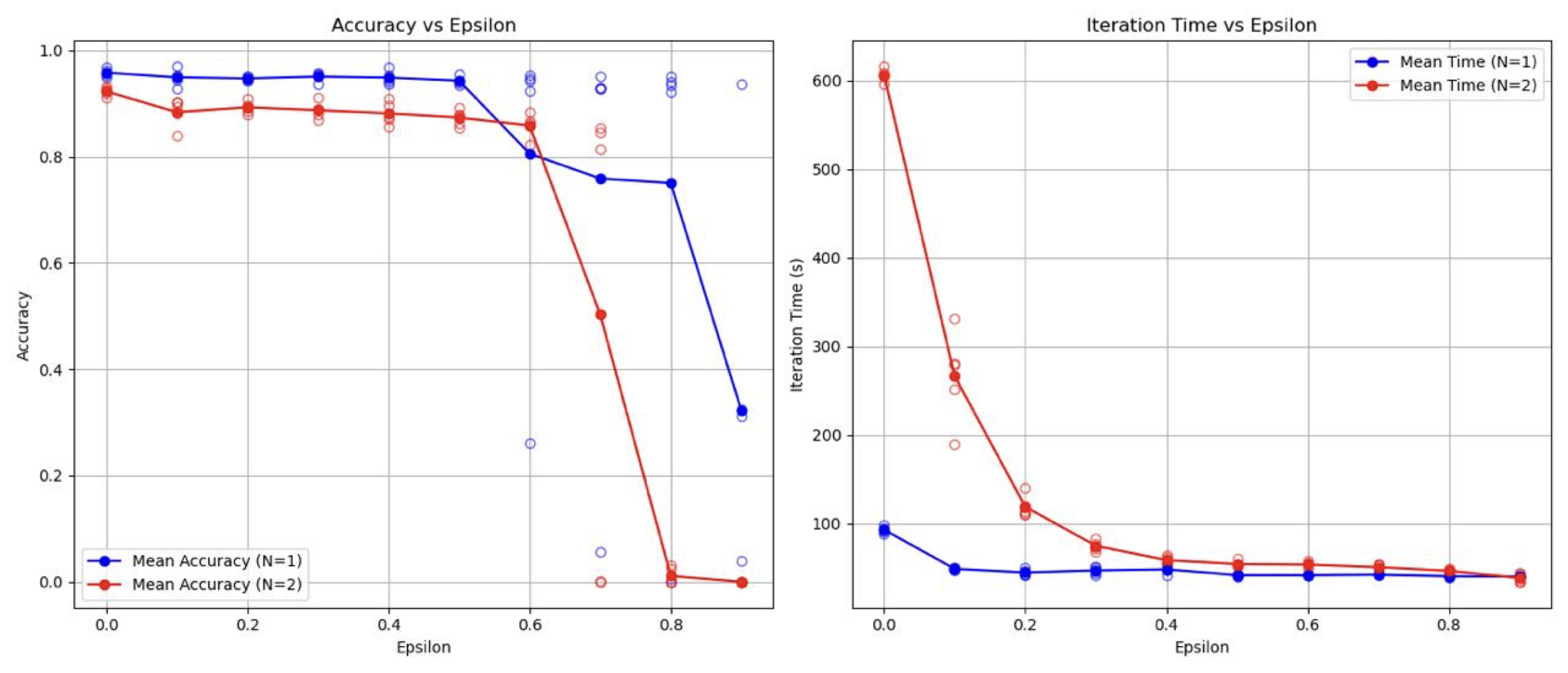}
        \caption{$N=1$}
    \end{subfigure}\hfill
    \begin{subfigure}{0.49\textwidth}
        \centering
        \includegraphics[width=\linewidth]{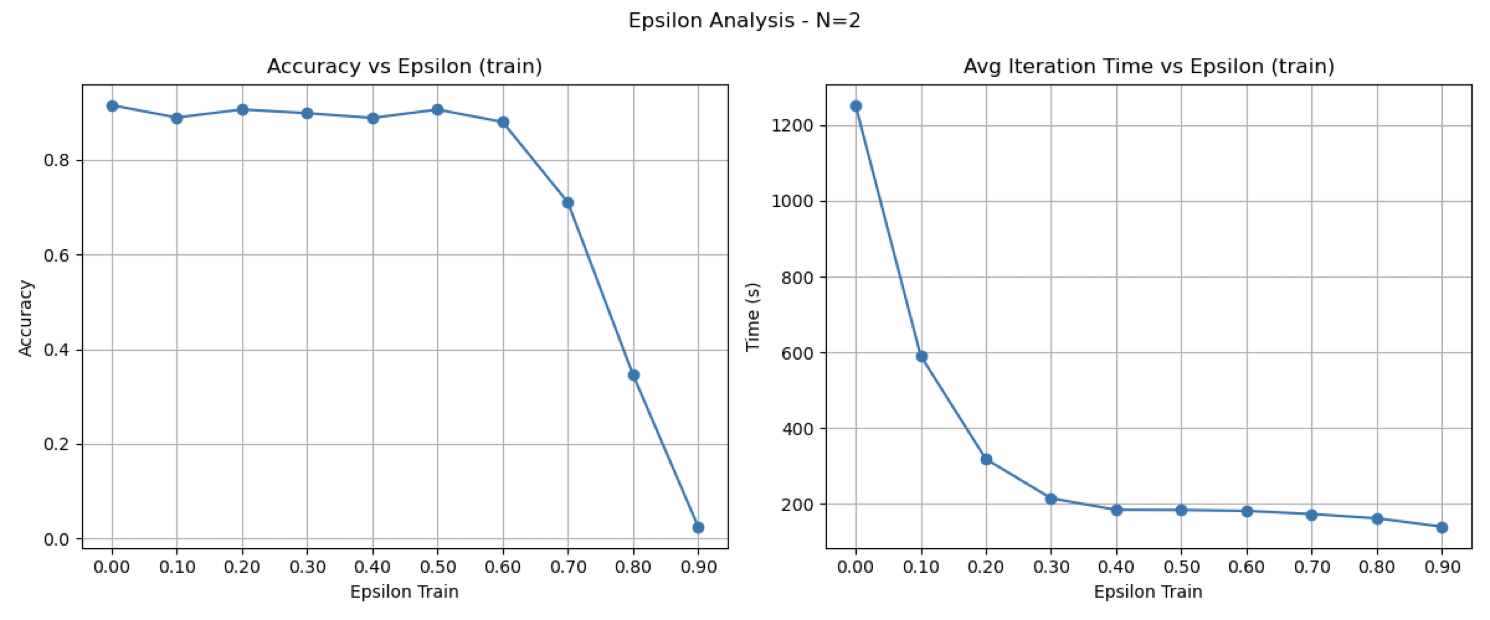}
        \caption{$N=2$}
    \end{subfigure}

    \vspace{0.4em}
    \begin{subfigure}{0.49\textwidth}
        \centering
        \includegraphics[width=\linewidth]{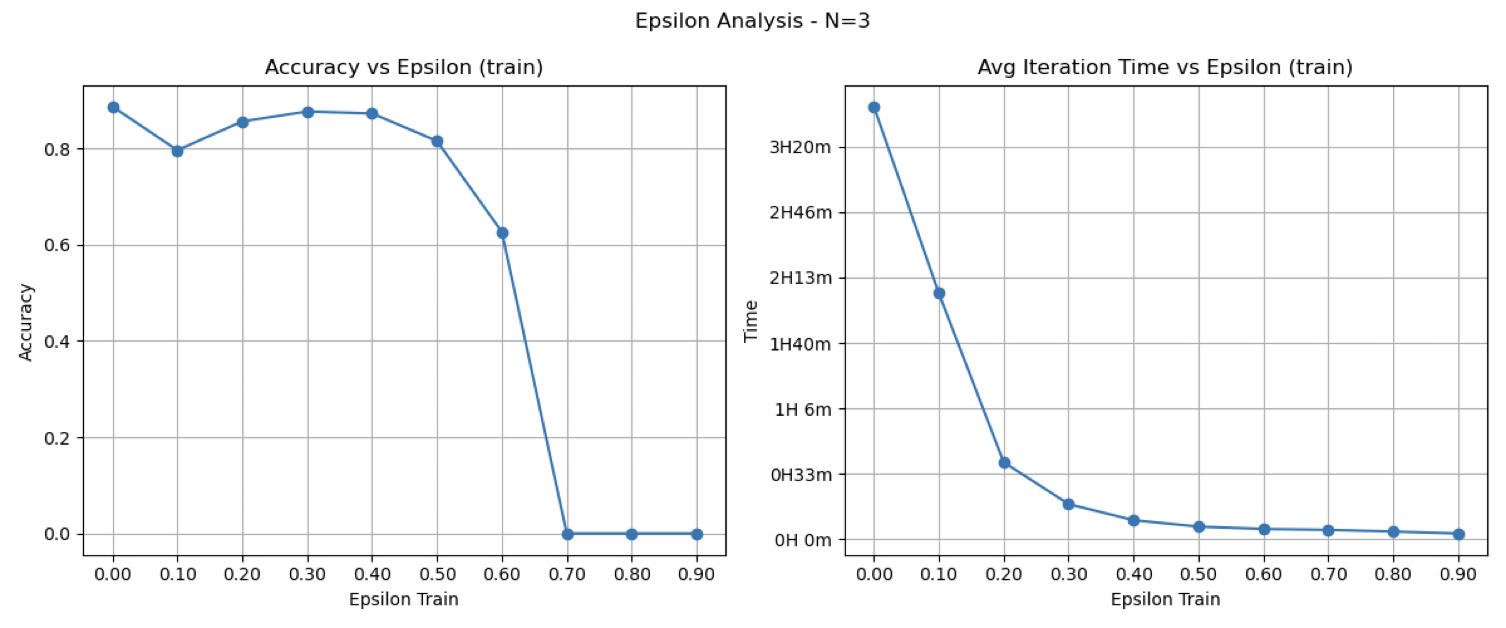}
        \caption{$N=3$}
    \end{subfigure}\hfill
    \begin{subfigure}{0.49\textwidth}
        \centering
        \includegraphics[width=\linewidth]{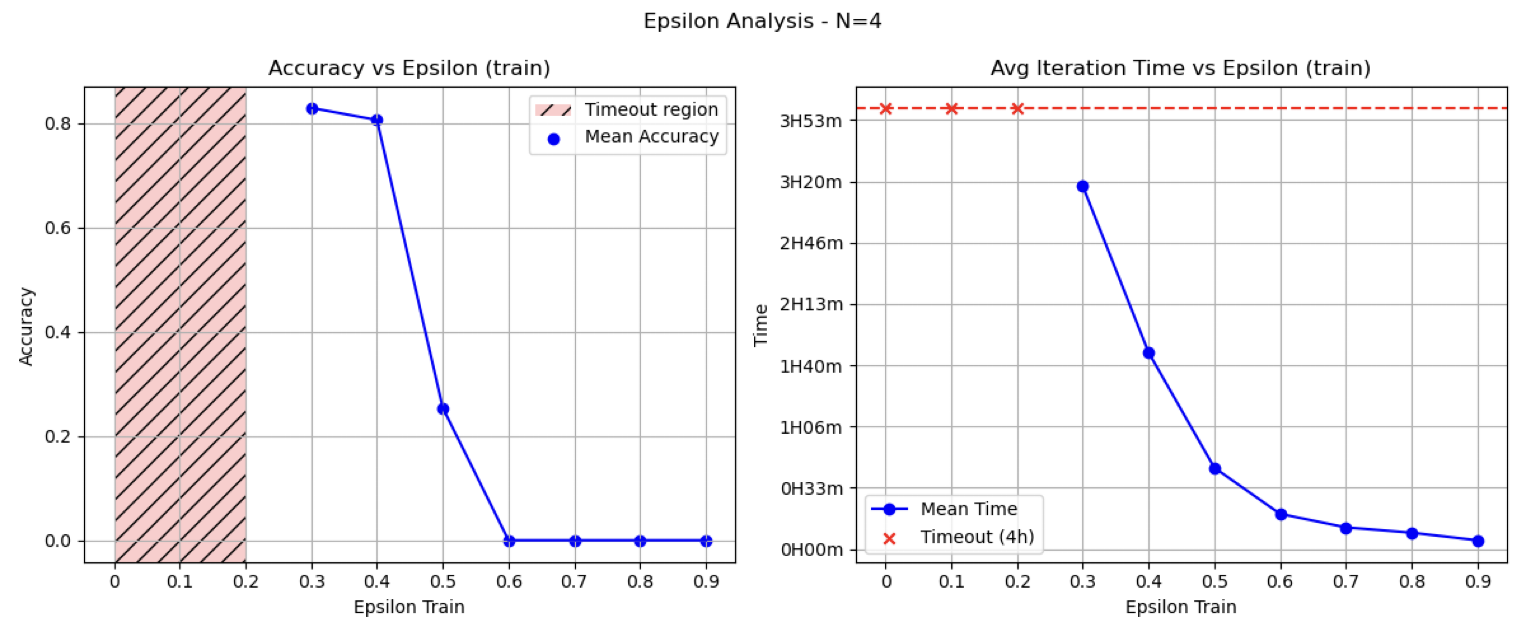}
        \caption{$N=4$}
    \end{subfigure}
    \caption{Test accuracy (left panel) and training time (right panel) as a function of the error budget $\epsilon$ for the $\epsilon$-approximation on MNIST-$N$-SUM.
    For $N=4$, the hatched region marks the budgets whose runs exceed the $4$\,h limit.
    No sweep was run for $N=8$.}
    \label{fig:appendix:epsilon-tuning}
\end{figure*}

\paragraph{Visual Sudoku.}
Table~\ref{tab:appendix:vsudoku-anytime-tuning} and Figure~\ref{fig:appendix:vsudoku-anytime-tuning} report the same observation over $t$ for $N=4$, with $10$ training epochs, batch size $1$, no classifier pretraining and $5$ seeds per configuration.
The anytime approximation shows the same threshold behaviour as on MNIST-$N$-SUM.
Accuracy stays at or below chance level ($0.5$ on this binary task) up to $t=0.06$\,s, rises to $0.815$ at $t=0.07$\,s and $0.880$ at $t=0.08$\,s, and then saturates between $0.877$ and $0.903$ up to $t=0.5$\,s, while the runtime keeps growing from $5$ to $16$\,min.
We therefore use $t = 0.08$\,s in the main experiments.

\begin{table}[t]
    \centering
    \footnotesize
    \setlength{\tabcolsep}{5pt}
    \begin{tabular}{@{}lccc@{}}
        \toprule
        \textbf{$t$ (s)} & \textbf{Acc.} & \textbf{Std} & \textbf{Runtime} \\
        \midrule
        0.01  & 0.350 & 0.189 & \textbf{1:07} \\
        0.02  & 0.401 & 0.085 & 1:52 \\
        0.04  & 0.492 & 0.018 & 2:56 \\
        0.05  & 0.499 & 0.002 & 3:35 \\
        0.06  & 0.534 & 0.076 & 4:16 \\
        0.07  & 0.815 & 0.022 & 4:49 \\
        0.075 & 0.839 & 0.037 & 5:03 \\
        0.08  & 0.880 & 0.046 & 5:13 \\
        0.1   & 0.877 & 0.025 & 6:23 \\
        0.2   & \textbf{0.903} & 0.019 & 9:19 \\
        0.5   & 0.884 & 0.037 & 16:16 \\
        \bottomrule
    \end{tabular}
    \caption{Anytime-timeout tuning on Visual Sudoku $N=4$: mean test accuracy, its standard deviation and mean total runtime (\texttt{mm:ss}) over $5$ seeds.
    Highest accuracy and lowest runtime in bold.}
    \label{tab:appendix:vsudoku-anytime-tuning}
\end{table}

\begin{figure}[t]
    \centering
    \includegraphics[width=\linewidth]{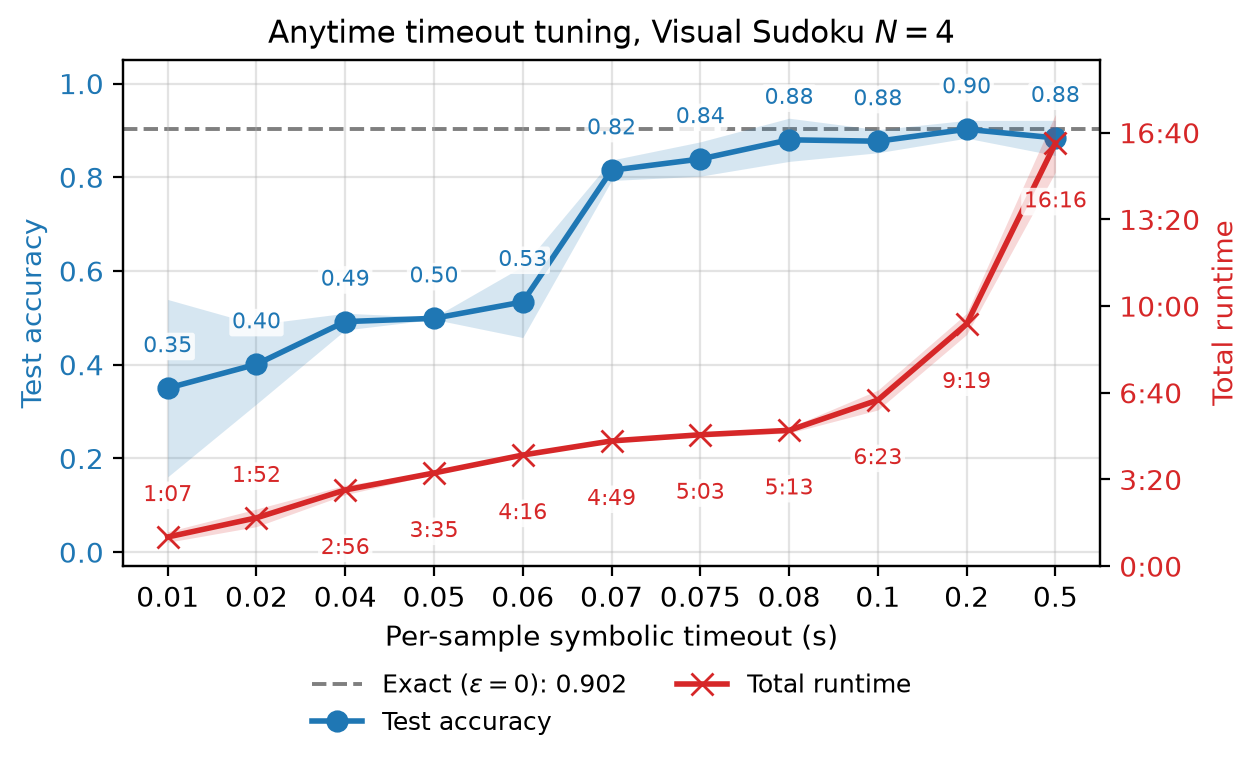}
    \caption{Test accuracy (blue, left axis) and total runtime (red, right axis) as a function of the per-sample timeout $t$ for the anytime approximation on Visual Sudoku $N=4$.
    Shaded bands span $\pm 1$ standard deviation over seeds; the dashed line marks exact inference (accuracy $0.902$).}
    \label{fig:appendix:vsudoku-anytime-tuning}
\end{figure}

\section{Related Works}
\label{sec:related_work}

\subsection{Symbolic AI}
\label{sec:rw:symbolic}
DPNL builds on a line of work on exact and approximate weighted model counting (WMC).
Probabilistic logic programming languages such as
ProbLog~\cite{NeSy:ProbLog:DeRaedt2007} provide an important foundation
for provenance-based probabilistic inference, later adopted by
neurosymbolic approaches such as DeepProbLog.
The recursive, search-based structure of DPNL follows the DPLL algorithm~\cite{WMC:DPLL:Davis1962}, which decomposes a formula by branching on variable assignments.
The use of oracles inside DPNL used to resolve a query under partial valuations is inspired by the decision-DNNF compilation strategies of Lagniez and Marquis~\cite{WMC:KC:LagniezMarquis2017}.
The approximation techniques used in ApproxDPNL are inspired by works in the symbolic AI literature, namely the anytime DPLL-based weighted model counting procedure of Dubray et al.~\cite{WMC:Approx:Dubray2024} and the approximate weighted model counting with $\epsilon$ guarantees~\cite{WMC:Approx:Chakraborty2015}.
Unlike DPNL and ApproxDPNL, none of these systems are used to train a neural component from output-level supervision.
They only address general-purpose probabilistic inference and compilation rather than neuro-symbolic learning.

\subsection{Neurosymbolic AI}
\label{sec:rw:neurosymbolic}
Neurosymbolic AI is a wide field containing a large number of different approaches.
They share the same overall goal: bringing neural networks and symbolic AI together, but differ widely in how the two are combined, e.g.\ by relaxing logical connectives into differentiable operators (fuzzy-logic approaches such as Logic Tensor Networks~\cite{Badreddine_2022}) or by adding a differentiable penalty for constraint violation to the training loss (the semantic loss~\cite{pmlr-v80-xu18h}).
Other probabilistic neurosymbolic frameworks adopt different inference
semantics altogether. For instance, NeuPSL~\cite{Pyror2023} integrates
neural predictions with Probabilistic Soft Logic and performs symbolic
inference as convex MAP inference in a deep hinge-loss Markov random
field.
See~\cite{NeSy:Survey:vanHarmelen2019,NeSy:Survey:DeRaedt2020,NeSy:Survey:Varmeulen2023} for surveys.

Within this broad landscape, we focus on approaches for solving the PNL problem defined in Section 2, in which neural predictions over intermediate variables are combined by a symbolic function to determine the final output. Existing approaches differ primarily in how they perform the probabilistic inference required to compute or approximate the output probability in Eq. (1). We distinguish below approaches based on exact reasoning, proof-based approximation, surrogate or sampling-based approximation, and hardware-oriented acceleration.

\paragraph{Exact reasoning.}
DeepProbLog~\cite{NeSy:DeepProbLog:Manhaeve2018} is the seminal PNL approach that compiles the query's full logical provenance and performs exact weighted model counting (WMC) on it.
NeurASP~\cite{NeurASP} provides an analogous integration with Answer Set Programs.
DPNL instead computes the output probability in Eq.~\ref{eq:theory:core} through a lazy search over regions of the assignment space, without requiring the complete provenance to be materialized before inference.

\paragraph{Proof-based approximation.}
A common approach to improving scalability is to restrict probabilistic inference to a subset of the query's proofs.
Scallop's top-$k$ semiring~\cite{NeSy:Scallop:Huang2021,NeSy:Scallop:Siyang2023} retains at most $k$ high-probability proofs during provenance construction, while approximate DeepProbLog~\cite{NeSy:DeepProbLog:Manhaeve2021} (DPLA$^*$) uses A$^*$-search to focus inference on high-probability explanations rather than exhaustively aggregating over all proofs.
These approaches therefore control the amount of proof search performed, rather than directly controlling the approximation error on the output probability.
In particular, Scallop's parameter $k$ controls the number of retained proofs, while the resulting error depends on the probability mass of the discarded proofs.
ApproxDPNL instead maintains certified lower and upper bounds on the exact output probability throughout inference.
Its stopping condition can therefore be expressed directly in terms of a desired additive or multiplicative error, while the search may be interrupted at any point and still returns a
certified interval.

\paragraph{Surrogate and sampling-based approximation.}
Other approaches avoid exact probabilistic inference by approximating either the inference result or the learning signal.
A-NeSI~\cite{NeSy:A-NeSI:vanKrieken2023} trains neural prediction and explanation models to approximate probabilistic neurosymbolic inference, optionally combined with a symbolic pruner that excludes infeasible partial assignments. ISED~\cite{NeSy:ISED:SolkoBreslin2024} 
instead samples intermediate symbols, evaluates the symbolic program on these samples, and estimates gradients from the resulting partial representation of the program's behavior.
These approaches can avoid the cost of exact inference, but unlike ApproxDPNL they do not maintain certified lower and upper bounds on the exact probability in Eq.~\ref{eq:theory:core}.
In particular, unlike A-NeSI, ApproxDPNL does not require an auxiliary
learned inference model: its approximation guarantees follow directly from the unresolved probability mass during search. 
ApproxDPNL nevertheless retains an explicit search over the assignment
space rather than replacing it with amortized or sampling-based
inference. 
A complementary direction changes the probabilistic modeling assumptions themselves.
Neurosymbolic diffusion models~\cite{vankrieken2025neurosymbolicdiffusionmodels} model dependencies between intermediate symbols through a discrete diffusion process.
DPNL instead retains the independence assumption of the PNL problem considered in this work and focuses on making inference over that model more efficient and controllable.

\paragraph{Hardware-oriented acceleration.}
A complementary line of work improves the execution of neurosymbolic reasoning on modern hardware.
KLAY~\cite{maene2025klayacceleratingarithmeticcircuits} maps arithmetic circuits to a representation designed for efficient parallel evaluation on GPU, while Lobster~\cite{NeSy:Lobster:Biberstein2026} compiles Datalog-based neurosymbolic programs for GPU execution. Dolphin~\cite{NeSy:Dolphin:Naik2025} executes symbolic reasoning on the CPU while vectorizing the probabilistic and gradient computations on the GPU. 
DeepLog~\cite{NeSy:DeepLog:Derkinderen2025} provides a more general neurosymbolic abstract machine based on extended algebraic circuits and GPU execution.
These approaches primarily address the execution of neurosymbolic computations, whereas DPNL changes the inference procedure itself by lazily exploring the assignment space and avoiding complete provenance materialization when possible. The two directions are therefore complementary: hardware-oriented execution techniques could in principle also be investigated for accelerating DPNL.

\section{Conclusion}

We introduced DPNL, an oracle-guided inference framework for
probabilistic neurosymbolic learning that avoids requiring the complete
logical provenance of a query to be materialized before inference.
DPNL instead explores the space of intermediate assignments lazily,
resolving regions as soon as their contribution to the target output
can be determined.
A generic oracle interface decouples this inference procedure from the
representation of the symbolic component.
For valid oracles, DPNL terminates and returns the exact output
probability, while ApproxDPNL can stop the same search early while
maintaining certified lower and upper bounds on that probability,
yielding controlled approximation guarantees.
Our experiments show that DPNL can substantially extend the range of
instances tractable by exact probabilistic inference, while ApproxDPNL
extends this range further through certified approximate inference.
This suggests that probabilistic neurosymbolic inference
can be organized as dynamic, on-demand exploration of the assignment
space, rather than requiring a complete representation of all reasoning
paths before inference begins.
This opens substantial room for further improvements in pruning, region
selection, and branching strategies.

\bibliography{references} 
\bibliographystyle{named}

\newpage
\appendix

\section{Termination and Correctness of DPNL}

We prove that DPNL terminates and returns the exact probability
associated with the target output.

For a valuation \(v\), let
\[
U(v)=\{k\in[1,m]\mid v[k]=\unknown\}
\]
denote the set of unassigned variables.
We define the completion probability of output \(o\) under \(v\) as
\[
p_\theta(o\mid i;v)
:=
\sum_{\substack{
v'\in\mathrm{tot}(v)\\
S(v'[1],\ldots,v'[m])=o
}}
\prod_{k\in U(v)}
\pi_{\theta,k}(v'[k]\mid i).
\]
Thus, the values fixed by \(v\) are held fixed, while the remaining
variables are independently distributed according to their
input-dependent distributions.

These weights define a probability distribution over
\(\mathrm{tot}(v)\), since
\[
\begin{aligned}
\sum_{v'\in\mathrm{tot}(v)}
\prod_{k\in U(v)}
\pi_{\theta,k}(v'[k]\mid i)
&=
\prod_{k\in U(v)}
\sum_{y\in V_k}
\pi_{\theta,k}(y\mid i)\\
&=1.
\end{aligned}
\]

For the empty valuation \(v_{\emptyset}\), this definition coincides
with Eq.~(1):
\[
p_\theta(o\mid i;v_{\emptyset})=p_\theta(o\mid i).
\]

\begin{theorem}
For any input \(i\in I\), output \(o\in O\), valid oracle
\(O_S\) for the symbolic function \(S\), and valuation
\(v\in\mathcal{V}\),
\[
\mathrm{DPNL}(i,o,v,O_S)
=
p_\theta(o\mid i;v).
\]
In particular,
\[
\mathrm{DPNL}(i,o,v_{\emptyset},O_S)
=
p_\theta(o\mid i).
\]
\end{theorem}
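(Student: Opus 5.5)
We argue by strong induction on \(|U(v)|\), the number of unassigned variables of \(v\). This single induction gives termination and correctness together. Each recursive call on \(v_{v[k]\leftarrow y}\) with \(k\in U(v)\) strictly decreases \(|U|\) by one, so the recursion tree has depth at most \(|U(v)|\). It is also finitely branching, since each \(V_k\) is finite. Termination therefore reduces to showing that every call either returns directly or recurses on valuations with smaller \(|U|\). The identity \(\mathrm{DPNL}(i,o,v,O_S)=p_\theta(o\mid i;v)\) is then proved along the same induction, and the special case \(v=v_\emptyset\) follows from the observation above that \(p_\theta(o\mid i;v_\emptyset)=p_\theta(o\mid i)\).

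\emph{Case analysis on the oracle answer.} If \(O_S(v,o)=1\), validity (Eq.~\eqref{eq:oracle:one}) gives \(S(v'[1],\ldots,v'[m])=o\) for every \(v'\in\mathrm{tot}(v)\). The sum defining \(p_\theta(o\mid i;v)\) then ranges over all of \(\mathrm{tot}(v)\), and equals \(1\) by the normalization computed above, which matches the returned value. If \(O_S(v,o)=0\), Eq.~\eqref{eq:oracle:zero} makes the sum empty, so \(p_\theta(o\mid i;v)=0\), again matching. This covers the base case \(|U(v)|=0\): since \(v\) is total, validity forbids the answer \(\unknown\), so one of these two branches is taken. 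If \(O_S(v,o)=\unknown\), then \(v\) is not total, so some \(k\in U(v)\) exists and line~\ref{alg:DPNL:choosek} is well defined. By the induction hypothesis, each call \(\mathrm{DPNL}(i,o,v_{v[k]\leftarrow y},O_S)\) terminates and returns \(p_\theta(o\mid i;v_{v[k]\leftarrow y})\).

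\emph{Decomposition step.} The main work is the identity
\[
p_\theta(o\mid i;v)=\sum_{y\in V_k}\pi_{\theta,k}(y\mid i)\,p_\theta(o\mid i;v_{v[k]\leftarrow y}).
\]
To prove it, note that \(\mathrm{tot}(v)\) is the disjoint union over \(y\in V_k\) of the sets \(\mathrm{tot}(v_{v[k]\leftarrow y})\): a total refinement of \(v\) has a unique value at position \(k\), and it refines \(v\) exactly when it refines \(v_{v[k]\leftarrow y}\) for that value. Moreover \(U(v_{v[k]\leftarrow y})=U(v)\setminus\{k\}\). Hence for each \(v'\in\mathrm{tot}(v_{v[k]\leftarrow y})\) the weight factors as
\[
\prod_{j\in U(v)}\pi_{\theta,j}(v'[j]\mid i)=\pi_{\theta,k}(y\mid i)\prod_{j\in U(v_{v[k]\leftarrow y})}\pi_{\theta,j}(v'[j]\mid i).
\]
Splitting the sum defining \(p_\theta(o\mid i;v)\) according to this partition and pulling out the factor \(\pi_{\theta,k}(y\mid i)=\prob(X_k(i)=y)\) yields the identity. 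This is exactly the value returned at line~\ref{alg:DPNL:useprob}, which completes the induction.

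I expect no serious obstacle. The most delicate point is the bookkeeping in the decomposition step: checking that the partition of \(\mathrm{tot}(v)\) is exact and that the index set of the product shrinks by exactly \(\{k\}\). A secondary point to state carefully is that termination of the \(\unknown\) branch relies on validity on total valuations, which guarantees that a free variable exists whenever the algorithm recurses.
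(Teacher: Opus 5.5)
Your proof is correct and follows essentially the same route as the paper's: induction on \(|U(v)|\), a case split on the oracle's answer using validity and the normalization of the completion weights, and a partition of \(\mathrm{tot}(v)\) by the value of \(X_k(i)\) that yields the recursive decomposition identity. The only difference is cosmetic: you fold the base case into the \(1\)/\(0\) branches and state explicitly that validity on total valuations guarantees a free variable in the \(\unknown\) branch, whereas the paper gets this implicitly from \(n(v)=n+1\geq 1\) in its induction step.
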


\begin{proof}
We proceed by induction on the number of unassigned variables in \(v\):
\[
n(v)=|U(v)|.
\]

\paragraph{Base case.}
Assume that \(n(v)=0\). Then \(v\) is a total valuation and
\(\mathrm{tot}(v)=\{v\}\).

Since \(O_S\) is a valid oracle, it always decides total valuations.
Therefore,
\[
O_S(v,o)=
\begin{cases}
1 & \text{if } S(v[1],\ldots,v[m])=o,\\
0 & \text{otherwise}.
\end{cases}
\]

By definition of the completion probability, and using the convention
that an empty product equals \(1\),
\[
p_\theta(o\mid i;v)=
\begin{cases}
1 & \text{if } S(v[1],\ldots,v[m])=o,\\
0 & \text{otherwise}.
\end{cases}
\]
Hence DPNL returns exactly \(p_\theta(o\mid i;v)\), and terminates.

\paragraph{Induction step.}
Assume the theorem holds for every valuation with at most \(n\)
unassigned variables. Consider a valuation \(v\) with
\(n(v)=n+1\).

There are three possible cases depending on the oracle output.

\paragraph{Case \(O_S(v,o)=1\).}
By validity of the oracle, every total refinement of \(v\) produces
the output \(o\). Hence,
\[
p_\theta(o\mid i;v)
=
\sum_{v'\in\mathrm{tot}(v)}
\prod_{k\in U(v)}
\pi_{\theta,k}(v'[k]\mid i).
\]
Since \(\mathrm{tot}(v)\) contains exactly all combinations of values
of the variables indexed by \(U(v)\), and each
\(\pi_{\theta,k}(\cdot\mid i)\) is a probability distribution,
\[
\begin{aligned}
p_\theta(o\mid i;v)
&=
\prod_{k\in U(v)}
\sum_{y\in V_k}
\pi_{\theta,k}(y\mid i)\\
&=1.
\end{aligned}
\]
DPNL returns \(1\), which is therefore the correct value.

\paragraph{Case \(O_S(v,o)=0\).}
By validity of the oracle, no total refinement of \(v\) produces
the output \(o\). The sum defining the completion probability therefore
contains no satisfying refinement, and
\[
p_\theta(o\mid i;v)=0.
\]
DPNL returns \(0\), which is the correct value.

\paragraph{Case \(O_S(v,o)=\unknown\).}
In this case, DPNL selects an index \(k\) such that
\(v[k]=\unknown\).
The regions represented by the refinements
\(v_{v[k]\leftarrow y}\), for \(y\in V_k\), partition the region
represented by \(v\).

For each \(y\in V_k\),
\[
U(v_{v[k]\leftarrow y})=U(v)\setminus\{k\},
\]
and every
\(v'\in\mathrm{tot}(v_{v[k]\leftarrow y})\)
satisfies \(v'[k]=y\). Hence,
\[
\prod_{j\in U(v)}
\pi_{\theta,j}(v'[j]\mid i)
=
\pi_{\theta,k}(y\mid i)
\prod_{j\in U(v_{v[k]\leftarrow y})}
\pi_{\theta,j}(v'[j]\mid i).
\]

Therefore, partitioning the total refinements of \(v\) according to
the value assigned to \(X_k(i)\) gives
\[
p_\theta(o\mid i;v)
=
\sum_{y\in V_k}
\pi_{\theta,k}(y\mid i)\,
p_\theta(o\mid i;v_{v[k]\leftarrow y}).
\]

For every \(y\in V_k\), the valuation \(v_{v[k]\leftarrow y}\) contains
one fewer unassigned variable:
\[
n(v_{v[k]\leftarrow y})=n(v)-1.
\]
Hence, by the induction hypothesis,
\[
\mathrm{DPNL}(i,o,v_{v[k]\leftarrow y},O_S)
=
p_\theta(o\mid i;v_{v[k]\leftarrow y}),
\]
and each recursive call terminates.

Since
\[
\Pr(X_k(i)=y)=\pi_{\theta,k}(y\mid i),
\]
the recursive computation performed by DPNL returns
\[
\begin{aligned}
&\sum_{y\in V_k}
\Pr(X_k(i)=y)\,
\mathrm{DPNL}(i,o,v_{v[k]\leftarrow y},O_S)
\\
&=
\sum_{y\in V_k}
\pi_{\theta,k}(y\mid i)\,
p_\theta(o\mid i;v_{v[k]\leftarrow y})
\\
&=
p_\theta(o\mid i;v).
\end{aligned}
\]

Since \(V_k\) is finite and every recursive call decreases the number
of unassigned variables, the current call also terminates.

Finally, for the empty valuation \(v_{\emptyset}\),
\[
p_\theta(o\mid i;v_{\emptyset})=p_\theta(o\mid i),
\]
and therefore
\[
\mathrm{DPNL}(i,o,v_{\emptyset},O_S)=p_\theta(o\mid i).
\]
\end{proof}

\section{Logic Oracle Construction}
\label{sec:logic-oracle}

\subsection{Logic-based characterization of symbolic functions}

We consider symbolic functions over finite domains that can be
characterized by logical reasoning.  
For each variable $X_k$ with finite domain $V_k$, we associate each
possible value $x\in V_k$ with a logical formula $\phi_{k,x}$ expressing
that $X_k=x$. 
We assume that, for every $k$, the formulas
\[
\{\phi_{k,x}\mid x\in V_k\}
\]
are mutually exclusive and exhaustive.

For each output value $o\in O$, let $\phi_o$ denote a logical formula
expressing that the symbolic function produces output $o$.

Let $T$ be a logical theory. We assume that the symbolic function $S$
satisfies:
\[
S(x_1,\ldots,x_m)=o
\iff
T\cup
\{\phi_{k,x_k}\mid k\in[1,m]\}
\vdash \phi_o .
\]

In other words, once all variables are assigned, the output of the
symbolic function is determined by logical entailment.

For a valuation $v$, we define the theory induced by $v$ as
\[
T_v =
T\cup
\{\phi_{k,v[k]}\mid v[k]\neq\unknown\}.
\]

Hence, a partial valuation corresponds to a partially instantiated
logical theory.

\subsection{Logic oracle}

The logic oracle uses a decision procedure for the underlying logic to
determine whether the current partial valuation is sufficient to decide
the target output.

\begin{algorithm}
\caption{Logic oracle}
\label{alg:logic-oracle}
\begin{algorithmic}
\REQUIRE valuation $v$, target output $o$
\ENSURE oracle output

\IF{$T_v\vdash\neg\phi_o$}
    \STATE $r\gets0$
\ELSIF{$T_v\vdash\phi_o$}
    \STATE $r\gets1$
\ELSE
    \STATE $r\gets\unknown$
\ENDIF
\STATE \RETURN $r$
\end{algorithmic}
\end{algorithm}

\subsection{Validity and completeness}

\begin{theorem}
Assume that the logical consequence relation is sound.
Then Algorithm~\ref{alg:logic-oracle} defines a valid oracle for $S$.
\end{theorem}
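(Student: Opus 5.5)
The plan is to verify the three defining conditions of a valid oracle one at a time: soundness of the answer $1$ (Eq.~\eqref{eq:oracle:one}), soundness of the answer $0$ (Eq.~\eqref{eq:oracle:zero}), and the requirement that total valuations are always decided. Two facts drive all three. The first is the characterization
\[
S(x_1,\ldots,x_m)=o \iff T\cup\{\phi_{k,x_k}\mid k\in[1,m]\}\vdash\phi_o ,
\]
which for a total valuation $v'$ reads $S(v'[1],\ldots,v'[m])=o \iff T_{v'}\vdash\phi_o$. The second is a monotonicity observation. If $v'\in\mathrm{tot}(v)$, then every $\phi_{k,v[k]}$ with $v[k]\neq\unknown$ also lies in $T_{v'}$, because $v'[k]=v[k]$. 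Hence $T_v\subseteq T_{v'}$. I will use that the consequence relation is monotone, which holds for the classical or decidable first-order fragments the construction targets. Then anything entailed by $T_v$ is also entailed by $T_{v'}$.

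\emph{Answer $1$.} The algorithm returns $1$ only if $T_v\vdash\phi_o$. By monotonicity, $T_{v'}\vdash\phi_o$ for every $v'\in\mathrm{tot}(v)$. The characterization then gives $S(v'[1],\ldots,v'[m])=o$, which is Eq.~\eqref{eq:oracle:one}.

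\emph{Answer $0$.} The algorithm returns $0$ only if $T_v\vdash\neg\phi_o$, so $T_{v'}\vdash\neg\phi_o$ for every total refinement $v'$. Suppose for contradiction that $S(v')=o$. Then $T_{v'}\vdash\phi_o$ as well. By soundness of $\vdash$, the theory $T_{v'}$ would then have no model.

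To rule this out I would argue from functionality of $S$. An inconsistent $T_{v'}$ entails $\phi_{o''}$ for every $o''$, so the characterization would force $S(v')=o''$ for all outputs $o''$. This is impossible as soon as $|\outputspace|\ge 2$. In the degenerate case I would instead invoke the natural standing assumption that $T$ is consistent with every full assignment, i.e.\ every $T_{v'}$ with $v'$ total is satisfiable. This step, where soundness is genuinely used, is one of the two delicate points. It establishes Eq.~\eqref{eq:oracle:zero}.

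\emph{Total valuations.} Let $v$ be total. If $S(v)=o$, then $T_v\vdash\phi_o$, so the oracle does not return $\unknown$; by the previous paragraph it cannot wrongly return $0$. If $S(v)=o'\neq o$, the characterization only tells us that $T_v\vdash\phi_{o'}$ and $T_v\nvdash\phi_o$. To obtain the required $T_v\vdash\neg\phi_o$, I expect to need the output formulas to be mutually exclusive modulo the theory, namely $T\vdash\phi_{o'}\rightarrow\neg\phi_o$ for $o'\neq o$. I would state this explicitly as part of the hypothesis that $\phi_o$ ``expresses that $S$ produces $o$'', since $S$ is single-valued. With it, $T_v\vdash\neg\phi_o$ and the oracle returns $0$.

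I expect this last step to be the main obstacle, because it relies on a deductive property of $T$ rather than on soundness alone. If that assumption is unavailable, the fallback is to make the oracle evaluate $S$ directly on total valuations. That keeps Algorithm~\ref{alg:logic-oracle} valid on partial valuations and trivially decides total ones, which suffices for validity in the sense defined in Section~\ref{section:oracle}.
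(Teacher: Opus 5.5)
For Eqs.~\eqref{eq:oracle:one}--\eqref{eq:oracle:zero} you follow the paper's argument: $T_v\subseteq T_{v'}$ for every $v'\in\mathrm{tot}(v)$, then monotonicity of $\vdash$, then the characterization of $S$. You go further than the paper by checking what it leaves implicit, and both points you flag as delicate are real.

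In the $0$ case, the paper passes directly from $T_{v'}\vdash\neg\phi_o$ to $S(v'[1],\ldots,v'[m])\neq o$. This tacitly assumes $T_{v'}$ is consistent. Your functionality argument supplies consistency when $|\outputspace|\ge 2$. Strictly, that branch uses ex falso for $\vdash$ (or completeness), not soundness: soundness only yields ``no model'', and that cannot be fed back into a $\vdash$-based characterization. Soundness does the work only in your satisfiability branch, and that assumption is genuinely needed. With $|\outputspace|=1$ and $T$ inconsistent, the characterization holds trivially, yet the oracle answers $0$ on every valuation.

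The bigger omission is that the paper's proof never checks that total valuations are decided, although this is part of the definition of validity. You are right that it does not follow from the stated hypotheses. A counterexample in classical propositional logic:
\begin{itemize}
\item take $m=1$, $V_1=\{a,b\}$, $\phi_{1,a}=p$, $\phi_{1,b}=\neg p$, $\phi_1=p$, $\phi_0=q$ and $T=\{\neg p\rightarrow q\}$;
\item the characterization holds, with $S(a)=1$ and $S(b)=0$;
\item for the total valuation $[a]$ and target $0$, the theory $T\cup\{p\}$ proves neither $q$ nor $\neg q$, so Algorithm~\ref{alg:logic-oracle} returns $\unknown$.
\end{itemize}
So an extra condition is genuinely required, such as your exclusivity hypothesis $T\vdash\phi_{o'}\rightarrow\neg\phi_o$ for $o'\neq o$, or your fallback of evaluating $S$ directly on total valuations. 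With it, your proposal is a complete proof of validity. The paper's argument, by contrast, only establishes the two implications.
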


\begin{proof}
Let $v$ be a valuation.

Assume first that the oracle returns $1$ for target output $o$.
Then
\[
T_v\vdash\phi_o .
\]

Let $v'\in\mathrm{tot}(v)$ be any total refinement of $v$.
By definition of refinement, every formula induced by $v$ is also
induced by $v'$, and therefore
\[
T_v\subseteq T_{v'} .
\]

By monotonicity of logical entailment,
\[
T_{v'}\vdash\phi_o .
\]

Using the characterization of $S$, we obtain
\[
S(v'[1],\ldots,v'[m])=o .
\]

Hence every total refinement of $v$ produces output $o$.

Now assume that the oracle returns $0$.
Then
\[
T_v\vdash\neg\phi_o .
\]

For any total refinement $v'\in\mathrm{tot}(v)$,
we again have
\[
T_v\subseteq T_{v'} ,
\]
and therefore
\[
T_{v'}\vdash\neg\phi_o .
\]

Consequently,
\[
S(v'[1],\ldots,v'[m])\neq o .
\]

Hence no total refinement of $v$ produces output $o$.

Therefore the oracle is valid.
\end{proof}

\begin{theorem}
Assume that the logical consequence relation is sound and complete.
Then Algorithm~\ref{alg:logic-oracle} defines a complete oracle for $S$.
\end{theorem}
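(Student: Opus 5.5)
Validity is already given by the preceding theorem, since completeness presupposes soundness. What remains is to show that the implications in Eqs.~\eqref{eq:oracle:one}--\eqref{eq:oracle:zero} are equivalences. Concretely, I will prove two converse implications. First, if every total refinement of $v$ produces $o$, then the algorithm returns $1$. Second, if no total refinement produces $o$, then it returns $0$. The core tool is completeness of $\vdash$, which lets me pass from semantic entailment $T_v\models\phi_o$ to $T_v\vdash\phi_o$. So the task reduces to establishing the semantic entailments $T_v\models\phi_o$ and $T_v\models\neg\phi_o$ in the respective cases.

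For the first converse, assume $S(v'[1],\ldots,v'[m])=o$ for all $v'\in\mathrm{tot}(v)$, and let $\mathcal M$ be any model of $T_v$.
\begin{itemize}
\item By exhaustiveness and mutual exclusivity of $\{\phi_{k,x}\}_{x\in V_k}$, for each unassigned $k$ there is exactly one $x_k$ with $\mathcal M\models\phi_{k,x_k}$. For assigned $k$, $\mathcal M\models\phi_{k,v[k]}$ already.
\item This defines a total refinement $v'$ of $v$ with $\mathcal M\models T_{v'}$.
\item By hypothesis $S(v')=o$, so the characterization of $S$ gives $T_{v'}\vdash\phi_o$.
\item Soundness then gives $\mathcal M\models\phi_o$.
\end{itemize}
Hence $T_v\models\phi_o$, and completeness yields $T_v\vdash\phi_o$.

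One ordering subtlety remains: the algorithm tests $T_v\vdash\neg\phi_o$ first. I must rule out that this test also succeeds, or else handle it. If both $\phi_o$ and $\neg\phi_o$ are derivable, then $T_v$ is inconsistent, so it has no models. I will argue that $T_v$ does have a model whenever $\mathrm{tot}(v)\neq\emptyset$. The second converse below yields $T_v\models\neg\phi_o$ only when no refinement produces $o$, so the cases are disjoint as long as $T_v$ is consistent.

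For the second converse, assume no total refinement produces $o$. The same model-extraction argument gives, for any model $\mathcal M$ of $T_v$, a total refinement $v'$ with $\mathcal M\models T_{v'}$ and $S(v')\neq o$.

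This is where the main obstacle lies. The characterization states only $S(v')=o\iff T_{v'}\vdash\phi_o$. So $S(v')\neq o$ yields merely $T_{v'}\nvdash\phi_o$, not $T_{v'}\vdash\neg\phi_o$. My first approach is to use that $S$ is a function and that the outputs $\phi_{o'}$ behave as determined values:
\begin{itemize}
\item $T_{v'}\vdash\phi_{S(v')}$.
\item I will need to argue, or adopt as the intended reading, that the $\phi_{o'}$ are mutually exclusive under $T$, so that $\phi_{S(v')}$ entails $\neg\phi_o$ when $S(v')\neq o$.
\end{itemize}
Alternatively, I can observe that for total $v'$ the theory $T_{v'}$ should be complete with respect to $\phi_o$, which is implicit in the oracle being required to decide total valuations.

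With either justification, $\mathcal M\models\neg\phi_o$, so $T_v\models\neg\phi_o$ and completeness gives $T_v\vdash\neg\phi_o$. The algorithm therefore returns $0$.

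Finally, consistency of $T_v$ follows from the existence of at least one total refinement, whose induced theory has a model; I would state this as a mild nondegeneracy assumption. Given consistency, the two cases are disjoint, and both converses hold, so the oracle is complete.
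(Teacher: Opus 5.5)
Your argument is correct and is essentially the paper's proof run in the direct rather than the contrapositive direction. The paper assumes the oracle returns $\unknown$, uses completeness to obtain models $I_1\models T_v\cup\{\phi_o\}$ and $I_0\models T_v\cup\{\neg\phi_o\}$, and reads off total refinements $v_1,v_0$ with $S(v_1)=o\neq S(v_0)$, relying on the same model-to-valuation extraction and characterization of $S$ that you use. The obstacle you flag is genuine, and the paper meets it at the same point: its step from $I_1\models\phi_o$ to $S(v_1)=o$ silently needs total theories to decide $\phi_o$, which is your second justification and also follows from the preceding theorem's claim that the logic oracle is valid, since validity includes deciding total valuations. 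Your ordering concern is settled more simply by that validity together with $\mathrm{tot}(v)\neq\emptyset$ than by a separate consistency assumption.
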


\begin{proof}
Assume that the oracle returns $\unknown$ for a valuation $v$ and output
$o$.

Then
\[
T_v\nvdash\phi_o
\]
and
\[
T_v\nvdash\neg\phi_o .
\]

By completeness of the logical consequence relation, there exists a
model $I_1$ of $T_v$ such that
\[
I_1\models\phi_o ,
\]
and a model $I_0$ of $T_v$ such that
\[
I_0\models\neg\phi_o .
\]

For $j\in\{0,1\}$, define a valuation $v_j$ by
\[
v_j[k]=x
\quad\text{iff}\quad
I_j\models\phi_{k,x}.
\]

Because $I_j$ satisfies $T_v$, all values fixed by $v$ are preserved.
Therefore,
\[
v_0,v_1\in\mathrm{tot}(v).
\]

Moreover, by the characterization of $S$,
\[
S(v_1[1],\ldots,v_1[m])=o
\]
and
\[
S(v_0[1],\ldots,v_0[m])\neq o .
\]

Thus both a refinement producing $o$ and a refinement not producing $o$
exist in the region represented by $v$.

Therefore returning $\unknown$ is correct, and the oracle is complete.
\end{proof}

\section{Application to DeepProbLog}

\subsection{DeepProbLog as a logic-based symbolic function}

DeepProbLog provides a natural instance of the logic-based symbolic
functions considered in Section~\ref{sec:logic-oracle}.

A DeepProbLog program combines deterministic logic rules with neural
predicates. The outcomes of neural predicates define the intermediate
variables
$X_1,\ldots,X_m$, while the deterministic logic program defines the
symbolic function $S$.

Consider a DeepProbLog query $q$ and let $T$ denote the deterministic
part of the program. For a complete valuation $v$, let $R_v$ denote the
set of neural facts corresponding to the values fixed by $v$.

The symbolic function associated with the query is defined as
\[
S(v[1],\ldots,v[m])=1
\iff
T\cup R_v\vdash q .
\]

Therefore, DeepProbLog fits the logic-based characterization introduced
in Section~\ref{sec:logic-oracle}, where the formulas associated with
the values of neural predicates correspond to the possible neural
facts.

\subsection{DeepProbLog oracle}

For DeepProbLog programs without negation, the logic oracle can be
implemented using the underlying Prolog inference procedure.

For the Boolean query setting of DeepProbLog, we consider
$O=\{0,1\}$ and focus on the target output $o=1$.

For a partial valuation $v$, let $R_v$ denote the neural facts fixed by
$v$. Let $R_v^\star$ denote the optimistic
completion of $R_v$, obtained by assuming all unassigned predicates are
true.

The oracle is defined as follows:

\begin{algorithm}
\caption{DeepProbLog oracle}
\label{alg:deepproblog-oracle}
\begin{algorithmic}
\REQUIRE valuation $v$
\IF{$T\cup R_v\vdash q$}
   \RETURN $1$
\ELSIF{$T\cup R_v^\unknown\nvdash q$}
   \RETURN $0$
\ELSE
   \RETURN $\unknown$
\ENDIF
\end{algorithmic} 
\end{algorithm}

The oracle is valid because DeepProbLog programs without negation are
monotonic: adding additional neural facts cannot invalidate an existing
proof.

The first case certifies that every completion of the current valuation
produces the query, while the second case certifies that no completion
can produce it. Otherwise, further exploration is required.

\subsection{Representing probabilistic choices}

DeepProbLog represents probabilistic choices through mutually exclusive
outcomes, rather than independent Boolean facts.

For a categorical variable with outcomes
$x_1,\ldots,x_n$, directly introducing independent probabilistic facts
would incorrectly allow several outcomes to hold simultaneously.
DeepProbLog therefore uses annotated disjunctions (ADs) and neural
annotated disjunctions (nADs) to represent such choices.

This transformation introduces auxiliary logical variables and rules.
Consequently, the logical representation used for provenance-based
inference may become substantially larger as the number of probabilistic
choices increases.

DPNL does not require this complete logical representation to be
materialized: its oracle-guided exploration can reason directly over
regions of the assignment space.

\subsection{Example: Graph reachability}

Consider a probabilistic directed graph where each potential edge
$(e_i,e_j)$ is represented by a neural predicate
$edge(e_i,e_j)$.

The query asks whether a target node is reachable from a source node.
The deterministic rules can be written as
\begin{align*}
&reachable(e_1).\\
&reachable(Y) \leftarrow reachable(X), edge(X,Y).
\end{align*}
The corresponding symbolic function returns $1$ when the query
$reachable(e_N)$ can be proved.

A provenance-based approach constructs a logical formula representing all
possible paths between the source and target nodes. In a dense graph,
the number of such paths can grow exponentially with the number of
nodes.

In contrast, the DeepProbLog oracle can reason directly over partial
valuations. For example, if a reachable path is already established by
the currently assigned edges, the region can immediately be accepted
without enumerating the remaining edges. Conversely, if reachability is
impossible even when all unknown edges are assumed present, the whole
region can be rejected.

This illustrates how DPNL can exploit logical reasoning directly during
inference instead of first materializing the complete provenance.

\section{Variable Selection Heuristics}

The choice of the next variable to refine determines the structure of
the exploration tree and can substantially affect the number of recursive
calls required by DPNL.

Correctness and termination do not depend on this choice, as every
branching step reduces the number of unassigned variables. However,
problem-specific heuristics may significantly improve pruning
efficiency.

When the oracle is complete, a partial valuation $v$ returning
$\unknown$ indicates that the region represented by $v$ contains both
successful and unsuccessful total refinements. In principle, a complete
oracle may provide information about refinements leading to
each outcome. 
This suggests a branching heuristic that prioritizes variables that are
likely to separate such refinements. More precisely, if the oracle can
provide witness refinements
\[
v^+ , v^- \in \tot{v}
\]
such that
\[
S(v^+[1],\ldots,v^+[m])=o
\]
and
\[
S(v^-[1],\ldots,v^-[m])\neq o ,
\]
one may prioritize variables $X_k$ for which
\[
v^+[k]\neq v^-[k].
\]

Such variables directly distinguish regions that respectively contribute
to or exclude the target output and may lead to smaller exploration
trees.
Designing effective variable-selection heuristics based on oracle
information is an interesting direction for future work.

\section{Experimental Details}
\label{appendix:experimental_details}
\subsection{Hardware}
The symbolic experiments are executed on an Apple MacBook Pro (M1 Max) with 64GB RAM.
The neurosymbolic experiments on Leaf and MNIST-$N$-SUM are executed on a CPU-only Dell Precision 7960 workstation with an Intel Xeon w5-3425 processor (12 cores, 24 threads, 3.2\,GHz base and 4.6\,GHz turbo frequency) and 512GB RAM.
The Visual Sudoku experiments are executed on a server with two Intel Xeon Silver 4114 processors (10 cores and 20 threads each, 2.2\,GHz base and 3.0\,GHz turbo frequency), 187GB RAM and an NVIDIA Quadro P5000 GPU.

\subsection{Implementation Details}

The DPNL framework is implemented in a modular and extensible Python codebase centered around symbolic reasoning with probabilistic inputs. It introduces a unified abstraction for defining symbolic functions, managing structured probabilistic inputs, and executing exact inference using oracle-guided recursive evaluation.
The implementation relies on Pytorch \cite{paszke2019pytorch} and uses Weights and Biases \cite{wandb} as experiment tracking tool. 

\paragraph{Core Abstractions.}
At the heart of the system is the \texttt{Symbolic} class, an abstract interface for symbolic functions $S$, and the \texttt{Input} class, which represents structured inputs composed of random variables (\texttt{RndVar}).
Each \texttt{RndVar} has a discrete domain and a probability distribution, and may be left undefined (\texttt{unknown}) until instantiated during inference.

\paragraph{Oracle Mechanism.}
DPNL oracles implement the \texttt{Oracle} interface, which answers whether $\S{v} = o$ is always true, always false, or unresolved for a given (possibly partial) valuation \(v\) and output \(o\).
Multiple oracle variants are supported:
\begin{itemize}
\item \textbf{Enumeration Oracle:} A naive baseline that returns \texttt{unknown} if any variable in $I$ is undefined. Useful as a reference implementation for exact inference.
\item \textbf{Basic Oracle:} Executes \(\S{v}\) and traps access to undefined variables to detect when the computation depends on an unassigned variable.
\item \textbf{Logic Oracle:} Implementation of the procedure to obtain a logic oracle given an interface with a logic, its solver and the function $S$.
\item \textbf{Combination Oracle:} Sequentially tries multiple oracles, falling back as needed to obtain stronger pruning or efficiency.
\end{itemize}

\paragraph{Recursive Inference.}
The \texttt{PNLProblem} class wraps a symbolic function and input.
It computes the probability that $S(v) = o$ via recursive branching over undefined variables, guided by the oracle.
 
\paragraph{SAT Logic Integration.}
We implemented an interface for the Logic Oracle with the Boolean logic and SAT solvers for the experiments.
The \texttt{SATLogic} class encodes formulas using literals, clauses, and conjunctions, and interfaces with a SAT solver (e.g., MiniSAT).
Symbolic functions defined in SAT are specified through the \texttt{SATLogicSymbolic} interface, which supports variable declarations, clause generation, and relational constraints over domains to easily define $S$.
This enables symbolic functions to be expressed and reasoned about at the propositional level, improving pruning and solver performance.
This experimental instantiation is only one possible implementation of
the generic logic oracle interface.

\subsection{Hyperparameters.} 
\label{appendix:hyperparameters}
The hyperparameters used in the experiments are summarized in Table~\ref{tab:hyperparameters}.
We keep the hyperparameters constant across the experiments with different numbers of digits. 
Further, we conduct 10 independent runs with different random seeds for DPNL. 
We enforce computational timeout: 24h for $N=\{2, 3, 4\} $ and 48h for $N=8$. 
For Visual Sudoku we enforce a computational timeout of 48h. 
\begin{table*}[t]
    \centering
    \footnotesize
    \setlength{\tabcolsep}{4pt}
    \renewcommand{\arraystretch}{1.1}
    \begin{tabular}{@{}llcccccc@{}}
        \toprule
        & \textbf{Engine} & $\mathbf{t}$ \textbf{(s)} & $\boldsymbol{\epsilon}$ & \textbf{Epochs} & \textbf{Batch} & \textbf{Pretrain} & \textbf{Test $t$ (s)} \\
        \midrule
        \multicolumn{8}{@{}l}{\emph{Leaf}} \\
        --                & exact               & --      & --          & $50$ & $4$  & --       & --    \\
        \midrule
        \multicolumn{8}{@{}l}{\emph{MNIST-$N$-SUM (basic oracle)}} \\
        $N \in \{2,3,4\}$ & exact               & --      & --          & $1$  & $2$  & --       & --    \\
        $N=2$             & anytime             & $0.006$ & --          & $1$  & $2$  & --       & $1$   \\
        $N=3$             & anytime             & $0.025$ & --          & $1$  & $2$  & --       & $1$   \\
        $N=4$             & anytime             & $0.25$  & --          & $1$  & $2$  & --       & $1$   \\
        $N=8$             & anytime             & $0.25$  & --          & $10$ & $16$ & --       & $100$ \\
        $N=2$             & $\epsilon$-approx.\ & --      & $0.5$       & $1$  & $2$  & --       & --    \\
        $N \in \{3,4\}$   & $\epsilon$-approx.\ & --      & $0.4$       & $1$  & $2$  & --       & --    \\
        \midrule
        \multicolumn{8}{@{}l}{\emph{Visual Sudoku (Sudoku oracle)}} \\
        $N=4$             & exact               & --      & --          & $10$ & $1$  & --       & --    \\
        $N=4$             & anytime             & $0.08$  & --          & $10$ & $1$  & --       & --    \\
        $N=4$             & $\epsilon$-approx.\ & --      & $10^{-6}$   & $10$ & $1$  & --       & --    \\
        $N=9$             & anytime             & $0.25$  & --          & $25$ & $16$ & $4$ ep.\ & $200$ \\
        \bottomrule
    \end{tabular}
    \caption{Hyperparameters for the neuro-symbolic experiments Leaf, MNIST-N-Sum and Vsudoku.
    All runs use the Adam optimiser with learning rate $1\mathrm{e}{-3}$ ($1\mathrm{e}{-4}$ for Leaf), and CPU-only execution.
    $t$ is the per-sample symbolic timeout for Anytime ApproxDPNL and $\epsilon$ the error budget.
    For Visual Sudoku N9 and MNIST N8, a higher timeout at inference is used. 
    Classifier pretraining uses batch size $32$ and learning rate $1\mathrm{e}{-3}$.
    The total training timeout is $24$--$48$\,h for MNIST-$N$-SUM, $24$\,h (exact) and $5$\,h (anytime, $\epsilon$-approx.) for Visual Sudoku $N=4$, and $48$\,h for $N=9$.}
    \label{tab:hyperparameters}
\end{table*}

\paragraph{MNIST Classifier.}
\label{mnist_classifier}
We follow \cite{NeSy:DeepProbLog:Manhaeve2018} and use the following neural network architecture as digit classifier. 
It consists of two convolutional layers with kernel size 5 and output dimension 6 and 16 (\texttt{nn.Conv2d(1, 6, 5)} and \texttt{nn.Conv2d(6, 16, 5),}) followed by Max pooling and Relu activation (\texttt{nn.MaxPool2d(2, 2)}).
A classifier with three linear layers is stacked which have the dimensions 120 and 84 respectively. 
The output dimension corresponds to the number of digits and equals 10. 

\subsection{Baselines} To obtain the results with the baselines for the MNIST-N-Sum task, we used the source code from the following publicly available code repositories: 
\begin{itemize}
    \item DeepProbLog \& DPLA$^*$: 
    \item []\url{https://github.com/ML-KULeuven/deepproblog}
    \item A-Nesi: \url{https://github.com/HEmile/a-nesi}
    \item Scallop: \url{https://github.com/scallop-lang/scallop}
\end{itemize}
For the Leaf task, all baselines were reproduced with the code from the ISED repository: \url{https://github.com/alaiasolkobreslin/ISED}.
\paragraph{A-Nesi.}
We conduct five independent runs with different random seeds for each experiment. 
Table~\ref{tab:hyperparameters:anesi} shows the hyperparameters used for configurations used for the different variants of A-Nesi \cite{NeSy:A-NeSI:vanKrieken2023}. 
\begin{table}[H]
    \small
    \centering
    \begin{tabular}{|c|c|c|c|}
    \hline
    \textbf{Key} & \textbf{Explain} & \textbf{Predict Only} & \textbf{Prune} \\ \hline
    \textbf{amt\_samples} & 600 & 600 & 600 \\ \hline
    \textbf{K\_beliefs} & 2500 & 2500 & 2500 \\ \hline
    \textbf{nrm\_lr} & 0.001 & 0.001 & 0.001 \\ \hline
    \textbf{perception\_lr} & 0.001 & 0.001 & 0.001 \\ \hline
    \textbf{dirichlet\_lr} & 0.01 & 0.01 & 0.01 \\ \hline
    \textbf{dirichlet\_iters} & 50 & 50 & 50 \\ \hline
    \textbf{dirichlet\_init} & 0.1 & 0.1 & 0.1 \\ \hline
    \textbf{dirichlet\_L2} & 900000 & 900000 & 900000 \\ \hline
    \textbf{nrm\_loss} & mse & mse & mse \\ \hline
    \textbf{hidden\_size} & 800 & 800 & 800 \\ \hline
    \textbf{layers} & 3 & 3 & 3 \\ \hline
    \textbf{prune} & False & False & True \\ \hline
    \textbf{predict\_only} & False & True & False \\ \hline
    \textbf{use\_prior} & True & True & True \\ \hline
    \end{tabular}
    \caption{Table of hyperparameters of A-Nesi for the variants explain, predict only, and prune.\label{tab:hyperparameters:anesi}}
\end{table}

\paragraph{Scallop.}
Since Scallop has been only applied to the MNIST-1-SUM task \cite{NeSy:Scallop:Huang2021}, we extend the experiments with Scallop to the MNIST-N-SUM task for N=\{2,3,4\} in order to use it as baseline. 
The logic programs are given in Listing~\ref{lst:scallop_rules}.
We use Scallop with exact as well as with approximate reasoning. 
For approximate reasoning of Scallop we choose the \texttt{diff-top-k-proof} semiring with $k=3$, as recommended in \cite{NeSy:Scallop:Huang2021}. 
To investigate the performance of Scallop with exact reasoning, we choose $k$ in a way that the logic provenance formula is in any case smalle than $k$ and is consequently not pruned. 
In particular, we choose $k$ for every MNIST-N-SUM task depending on N: 
$$k = 2^{2 \cdot 10 \cdot N}.$$ 
This way, $k$ represents an upper bound for the length of the logical provenance formula. 

\paragraph{DPLA$^*$. } 
DPLA$^*$ employs heuristics to estimate the probability of partial proofs to search for the best proof. 
Here, we use the geometric mean heuristic \cite{NeSy:DeepProbLog:Manhaeve2021}.

\begin{lstlisting}[caption={Rules used in Scallop for the MNIST-N-SUM task}, label={lst:scallop_rules}]
# N=1
self.scl_ctx.add_rule("summand_one(a) :- digit_1(a)")
self.scl_ctx.add_rule("summand_two(a) :- digit_2(a)")
# N=2
self.scl_ctx.add_rule("summand_one(10 * a + b) :- digit_1(a), digit_2(b)")
self.scl_ctx.add_rule("summand_two(10 * a + b) :- digit_3(a), digit_4(b)")
# N=3
self.scl_ctx.add_rule("summand_one(10 * 10 * a + 10 * b + c) :- digit_1(a), digit_2(b), digit_3(c)")
self.scl_ctx.add_rule("summand_two(10 * 10 * a + 10 * b + c) :- digit_4(a), digit_5(b), digit_6(c)")
# N=4
self.scl_ctx.add_rule("summand_one(10 * 10 * 10 * a + 10 * 10 * b + 10 * c + d) :- digit_1(a), digit_2(b), digit_3(c), digit_4(d)")
self.scl_ctx.add_rule("summand_two(10 * 10 * 10 * a + 10 * 10 * b + 10 * c + d) :- digit_5(a), digit_6(b), digit_7(c), digit_8(d)")
\end{lstlisting}

\end{document}